\documentclass{article}

\PassOptionsToPackage{numbers, compress}{natbib}
\usepackage[preprint]{neurips_2026}

\usepackage[utf8]{inputenc} 
\usepackage[T1]{fontenc}    
\usepackage{hyperref}       
\usepackage{url}            
\usepackage{booktabs}       
\usepackage{amsfonts}       
\usepackage{nicefrac}       
\usepackage{microtype}      
\usepackage{xcolor}         
\usepackage{graphicx}
\usepackage{colortbl} 
\usepackage{siunitx}
\definecolor{urorow}{RGB}{238,234,246}

\usepackage{amsmath}
\usepackage{amssymb}
\usepackage{mathtools}
\usepackage{amsthm}
\usepackage{enumitem}
\usepackage{bbm}
\usepackage{siunitx}
\usepackage{cancel}

\usepackage{cleveref}
\crefformat{equation}{Eq.~(#2#1#3)}
\Crefformat{equation}{Eq.~(#2#1#3)}
\crefrangeformat{equation}{Eqs.~(#3#1#4--#5#2#6)}
\Crefrangeformat{equation}{Eqs.~(#3#1#4--#5#2#6)}
\crefmultiformat{equation}{Eqs.~(#2#1#3)}{ and~(#2#1#3)}{, (#2#1#3)}{ and~(#2#1#3)}
\Crefmultiformat{equation}{Eqs.~(#2#1#3)}{ and~(#2#1#3)}{, (#2#1#3)}{ and~(#2#1#3)}
\crefalias{appendix}{section}
\Crefname{appendix}{Appendix}{Appendices}
\crefname{table}{tab.}{tabs.}
\Crefname{table}{Tab.}{Tabs.}

\newtheorem{theorem}{Theorem}[section]

\newcommand{\pitheta}{\pi_{\theta}}
\newcommand{\piref}{\pi_{\text{ref}}}

\title{Be Your Own Teacher: Steering Protein Language Models via Unsupervised Reward Optimization}

\author{
Lanqing Li\textsuperscript{\rm 1}\thanks{Corresponding Author}\, , \, 
Shentong Mo\textsuperscript{\rm 2},
Yang Yu\textsuperscript{\rm 3},
\textbf{Pheng-Ann Heng}\textsuperscript{\rm 1}
\\
\textsuperscript{\rm 1} The Chinese University of Hong Kong,\\
\textsuperscript{\rm 2} MBZUAI, 
\textsuperscript{\rm 3} Hong Kong University of Science and Technology\\
[.5em]
{\tt\small \{lanqingli1993, shentongmo\}@gmail.com, eeyangyu@ust.hk, pheng@cse.cuhk.edu.hk}
\setcounter{footnote}{0}
}

\begin{document}

\maketitle

\begin{abstract}
Protein language models (PLMs) have emerged as powerful tools for controllable biomolecular design, yet their post-training adaptation typically relies on costly wet-lab validation or curated preference datasets. To overcome this supervision bottleneck, we introduce unsupervised reward optimization of PLMs, a comprehensive framework for steerable protein generation without ground-truth labels. Our key insight is that task-agnostic rewards, which combine intrinsic model uncertainty with extrinsic semantic consistency informed by protein representation models, exhibit strong correlation with controllability measures across base models and temperature regimes. Building upon this discovery, we propose two offline algorithms: Soft Reward Optimization (SRO) and  Binarized Reward Optimization (BRO), which effectively maximize the classical RLHF objective induced by these proxy rewards. Extensive experiments on compositional out-of-distribution prompts demonstrate that both methods significantly outperform competitive baselines (DPO, KTO), while approaching oracle performance across multiple sampling temperatures, model scales and protein families. Moreover, PLMs fine-tuned with unsupervised rewards can achieve consistently higher coverage compared to their base model in pass@k evaluations. By enabling self-improvement of PLMs through their own generated experience, our framework provides a scalable pathway toward controllable biomolecular design in settings where labeled preferences or experimental feedback are scarce or unavailable.
\end{abstract}

\section{Introduction}\label{sec:intro}

Just as large language models (LLMs) encode, express and reason over natural languages of human intelligence, biomolecular foundation models (BFMs) have been developed~\cite{rives2019biological, alley2019unified,  9477085, jumper2021highly, baek2021accurate, chen2022interpretable,   madani2023large, watson2023novo, susaprot2024, nguyen2024sequence, hayes2025simulating, lewis2025scalable, passaro2025boltz, chen2025xtrimopglm, mo2026sadit} to decipher the language of life. Exemplified by protein language models (PLMs) which were trained on billions of biological sequences\footnote{By \textit{biological sequences}, we mean molecular representations that can be tokenized and serialized for sequential modeling. For instance, multimodal PLMs such as ESM3 employ different input tracks to encode protein sequence, structure, and function.} of proteins, BFMs learn expressive deep representations emerging within language models that reflect the evolution, structure and function of biomolecules without explicit supervision on those properties~\cite{heinzinger2019modeling,meier2021language,raotransformer2021}. Moreover, frontier PLMs like ProGen~\cite{madani2023large, nijkamp2023progen2} and ESM3~\cite{hayes2025simulating} enable controllable generation and design~\cite{ferruz2022controllable} by following prompts that specify protein families, functions, or other design objectives. It has been extensively shown that the generation quality and fidelity can be significantly improved with scale by post-training~\cite{hayes2025simulating, widatalla2024aligning, stocco2024guidinggenerativeproteinlanguage}.

Supervised fine-tuning (SFT) and reinforcement learning from human feedback (RLHF) have emerged as the dominant paradigm for aligning large language models (LLMs) with human preferences during post-training adaptation~\cite{ziegler2019fine, stiennon2020learning, bai2022training, ouyang2022training, korbak2023pretraining}. More recently, reinforcement learning with verifiable rewards (RLVR) has been popularized to achieve remarkable reasoning capability of LLMs on mathematics, coding and science benchmarks~\cite{shao2024deepseekmath, guo2025deepseek, team2025qwq, yang2025qwen3, comanici2025gemini}. However, these approaches typically incur high costs for data labeling, and when scaling beyond human expertise in specialized domains, obtaining reliable ground-truth supervision becomes increasingly infeasible~\cite{burnsweak, silver2025welcome, he2026how}. For post-training BFMs toward generalist biological artificial intelligence (GBAI)~\cite{rao2026generalist}, the challenge intensifies: data curation requires prohibitively expensive wet-lab validation or clinical trials and poses substantial safety risks in healthcare and biomedical applications.

In the domain of LLMs, this supervision bottleneck~\cite{he2026how} has spurred growing interest in unsupervised post-training, which leverages intrinsic and extrinsic reward signals without ground-truth labels, achieving encouraging gains in language tasks such as free-form question-answering and reasoning~\cite{zhaoabsolute, zuo2025ttrl, zhang2025right}. Yet in biological domains, how to devise reliable proxy rewards for BFM post-training and equally importantly, how to design algorithms robust to the presumably noisy supervision signals remain largely unexplored. 

To address the outstanding challenge of unsupervised post-training for steerable generation of BFMs, we take protein language models as testbeds and conduct a holistic study covering the design of rewards, data sampling and optimization algorithms as well as benchmark tasks for \textit{in silico} evaluation. By examining a broad spectrum of inherent and external statistical metrics across different sampling temperatures, we identify reward functions that exhibit good correlation with the PLM's ability to follow high-level functional and structural instructions. Based on the reward design,
we propose two offline algorithms, namely \textit{Soft Reward Optimization (SRO)} and \textit{Binarized Reward Optimization (BRO)} to optimize the classical RLHF objective, by leveraging continuous and discrete reward signals respectively. We find that both proposed methods significantly enhance the steerability of frontier PLMs despite relying solely on noisy supervision, and attain performance comparable to that of the oracle (base model fine-tuned with ground-truth labels) in certain temperature regimes. Our main contributions include:


\begin{itemize}[leftmargin=*, itemsep=2pt, topsep=2pt]
    \item We present, to the best of our knowledge, the first investigation to fine-tune protein language models (PLMs) with \textit{unsupervised rewards} for controllable generation. This work paves the way for steerable adaptation and self-improvement of PLMs on offline, unlabeled corpora, with promising implications for biomedical domains that suffer from limited annotated data.

    \item We provide a novel, comprehensive recipe for unsupervised post-training of PLMs, involving (i) task-agnostic metrics as proxy rewards, (ii) a multi-temperature sampling strategy for creating diverse training sets, and (iii) SRO and BRO algorithms that effectively enhance the instruction-following capability of frontier PLMs, even when trained with noisy supervision.
    \item Our empirical study demonstrates that under the same unsupervised reward labeling protocol, both SRO and BRO outperform competitive baselines like DPO~\cite{rafailov2023direct} and KTO~\cite{ethayarajh2024model} by a substantial margin across various out-of-distribution (OOD) generation tasks, sampling temperatures and model scales. 
    \item To facilitate future research, we provide novel datasets and benchmark tasks by constructing realistic compositional OOD prompts from 7 Pfam~\cite{finn2014pfam} protein families, to probe the controllability and generalization of PLMs in \textit{de novo} protein design~\cite{ferruz2022controllable, winnifrith2024generative, kortemme2024novo,yang2026past}.
\end{itemize}

\section{Related Work}\label{sec:relatedwork}

\textbf{Protein Language Models}\,\,\,  Inspired by the inherent similarity between natural languages and biological sequences, protein language models (PLMs) have revolutionized computational biology by learning rich, contextual representations from massive unlabeled corpora of protein sequences through NLP-inspired architectures~\cite{vaswani2017attention, devlin2019bert} and self-supervised pre-training~\cite{rives2019biological, 9477085, heinzinger2019modeling, meier2021language, raotransformer2021}. Pioneering models like ESM~\cite{rives2019biological} and ProtTrans~\cite{9477085} demonstrate that transformer architectures can implicitly capture evolutionary constraints, structural stability, and functional motifs. This foundational capability has catalyzed a new generation of controllable generative PLMs. For instance, ProGen~\cite{madani2023large, nijkamp2023progen2} and ProLLaMA~\cite{lv2025prollama} enable conditional generation to create novel proteins across diverse families, while frontier multimodal foundation model like ESM3~\cite{hayes2025simulating} can jointly reason over sequence, structure, and function to perform complex, instruction-guided design tasks, such as generating a functional fluorescent protein with low homology to wild types. Recent studies~\cite{widatalla2024aligning, stocco2024guidinggenerativeproteinlanguage, hayes2025simulating} apply preference-based~\cite{rafailov2023direct}, directed evolution~\cite{jiang2024rapid, yang2019machine} and activation steering~\cite{huang2025steering} methods to greatly enhance the steerability of PLMs. However, the reliance on curated datasets of preference pairs and experimental observations significantly constrains their applicability in biological domains. To this end, our work charts a course for PLM self-improvement through their own generated experience instead of human data~\cite{silver2025welcome}, thereby opening a promising avenue towards scientific superintelligence~\cite{burnsweak, song2025evaluating}.


\textbf{Unsupervised Fine-Tuning of Language Models}\,\,\, Aligning LLMs with complex primitives typically relies on Reinforcement Learning from Human Feedback (RLHF)~\cite{ziegler2019fine, stiennon2020learning, bai2022training, ouyang2022training}. However, such approach is severely hampered by the "supervision bottleneck"~\cite{he2026how}: acquiring high-quality labels incurs substantial human labor and even worse, requires costly and time-consuming wet-lab experiments or clinical data in biological domains.
This challenge has motivated the study of unsupervised post-training methods that use proxy rewards instead of ground-truth labels. A key line of research is Unsupervised Reinforcement Learning with Verifiable Rewards (URLVR), where existing methods can be classified as either \textit{intrinsic} or \textit{extrinsic} based on their source of supervision signals~\cite{he2026how}. Intrinsic reward methods such as those based on self-certainty~\cite{zhao2025learning}, entropy~\cite{zhang2025right, cui2025entropy, agarwal2025unreasonable}, or majority voting~\cite{zuo2025ttrl} leverage proxy rewards generated solely by the model itself. They operate through a common "sharpening" mechanism that reinforces the model's initial confident predictions, achieving early training gains at the risk of reward hacking and model collapse at later stages~\cite{agarwal2025unreasonable, shafayat2025can, zhang2025no}. In contrast, extrinsic reward methods
generate rewards via mechanisms such as self-supervised objectives on unlabeled corpora~\cite{dong2025reinforcement, hatamizadeh2025rlp, she2025dupo, akter2026nemotron} or external computation from compilers, proof assistants or game engines~\cite{zhaoabsolute, simonds2025ladder, shao2025deepseekmath, hubert2025olympiad}, which are entirely independent of the model's internal state. These rewards are generally more robust and scalable than intrinsic ones, however, their reliance on feedback from verifiable environments renders them infeasible in many scientific domains that require large-scale simulation or wet-lab/clinical validation. In this paper, we propose hybrid, task-agnostic reward functions for PLM fine-tuning that combine intrinsic uncertainty measures with self-consistency metrics from an external protein representation model ESMC~\cite{esm2024cambrian}. Our empirical study shows that this unique design exhibits consistent agreement with controllability measures (e.g. keyword/structure recovery) without \textit{any prior knowledge} of the ground-truth labels, and is demonstrated to elicit strong instruction-following capability of PLMs via reward optimization algorithms.

\section{Method}\label{sec:method}

\subsection{Preliminaries}
To achieve \textit{de novo} protein design via controllable generation, such as generating highly specialized functional proteins, frontier generative PLMs~\cite{madani2023large, hayes2025simulating, nijkamp2023progen2} are generally\footnote{Some multimodal PLMs like ESM3 may skip the SFT stage by tokenizing control tags (e.g. structural and functional tokens) in the pretraining stage. However, if the downstream task represents a novel distribution that cannot be naively characterized by known prompts during pretraining, SFT should still be used for adaptation to the new distribution and prompts/conditions.} trained via a three-stage procedure similar to modern LLMs~\cite{ouyang2022training}:

\textbf{Pretraining} \,\, Given a large corpus of protein sequences $y$, train the model $\pitheta$ to minimize a generative masked language modeling objective:
\vspace{-3pt}
\begin{align}
    \mathcal{L}_{\text{pretrain}} = -\mathbb{E}_{y, m}\sum_{i\in m}\log\pitheta(y_i|y_{\backslash m}).\label{eqn:pretrain_obj}
\end{align}
\vspace{-\baselineskip}

If $m$ is a causal mask, \cref{eqn:pretrain_obj} becomes the classical autoregressive next-token prediction. This stage endows the model with general knowledge of the statistical and evolutionary patterns of natural protein sequences. The pretrained model $\pi_0$ can therefore be used for \textit{unconditional generation}.


\textbf{Supervised Finetuning (SFT)} \,\, Finetune the model via masked language modeling on data that are more relevant to the downstream task. Analogous to instruction finetuning~\cite{weifinetuned} in LLM, SFT data often comprise control tags as instructions $x$ and a completed protein sequence as the response $y$. The finetuned model $\piref$ is therefore trained by the objective
\vspace{-3pt}
\begin{align}
    \mathcal{L}_{\text{SFT}} = -\mathbb{E}_{x, y, m}\sum_{i\in m}\log\pitheta(y_i|y_{\backslash m }, x)
\end{align}
\vspace{-\baselineskip}

and applicable to \textit{conditional generation}.

\textbf{Reinforcement Learning (RL)} \,\, To further align $\pi_\text{ref}$ for controllable generation and design, a reward/score function $r(x, y)$ operates to quantify the quality of the generated sequence $y$ given prompt $x$. Similar to RLHF/RLVR for LLM post-training, this RL stage can be formulated as solving a KL-constrained reward maximization problem:
\begin{align}
    \underset{\pi_{\theta}}{\max}\,\,\mathbb{E}_{x\sim\mathcal{D},y\sim\pi_{\theta}(y|x)}[r(x, y)] - \beta D_{\text{KL}}[\pi_{\theta}(y|x)\,\|\,\pi_{\text{ref}}(y|x)].\label{eqn:rlhf}    
\end{align}
\vspace{-\baselineskip}

Following~\cite{rafailov2023direct}, the problem above has a closed-form solution:
\vspace{-3pt}
\begin{align}
    \pi^*(y|x) = \frac{1}{Z(x)}\pi_{\text{ref}}(y|x)\exp\left(\frac{1}{\beta}r(x, y)\right),\label{eqn:optimal_policy}
\end{align}
\vspace{-\baselineskip}

where $Z(x) = \sum_y \pi_{\text{ref}}(y | x)\exp(\frac{1}{\beta}r(x, y))$ is the partition function. 

\begin{theorem}\label{thm:reward_optimization}
The reward optimization problem in \cref{eqn:rlhf} is equivalent to minimizing the KL-divergence $D_{\text{KL}}[\pi_{\theta}(y|x) \,\|\, \pi^*(y|x)]$, where $\pi^*(y|x)$ is the optimal policy defined in \cref{eqn:optimal_policy}.
\end{theorem}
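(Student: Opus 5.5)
The plan is to rewrite the objective in \cref{eqn:rlhf} so that $\pi^*$ appears explicitly, following the standard manipulation of~\cite{rafailov2023direct}.

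\textbf{Step 1: flip the sign and expand.} Fix a prompt $x$. Maximizing the objective is the same as minimizing its negative divided by $\beta>0$:
\begin{align*}
\mathbb{E}_{y\sim\pitheta(y|x)}\left[\log\frac{\pitheta(y|x)}{\piref(y|x)} - \frac{1}{\beta}r(x,y)\right]
= \mathbb{E}_{y\sim\pitheta(y|x)}\left[\log\frac{\pitheta(y|x)}{\piref(y|x)\exp\left(\frac{1}{\beta}r(x,y)\right)}\right].
\end{align*}

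\textbf{Step 2: introduce $\pi^*$.} Multiply and divide the denominator by $Z(x)$, and use \cref{eqn:optimal_policy}. The bracket becomes $\log\frac{\pitheta(y|x)}{\pi^*(y|x)} - \log Z(x)$. Taking the expectation over $y\sim\pitheta(\cdot|x)$ gives
\begin{align*}
D_{\text{KL}}[\pitheta(y|x)\,\|\,\pi^*(y|x)] - \log Z(x).
\end{align*}
Averaging over $x\sim\mathcal{D}$ then shows that
\begin{align*}
-\frac{1}{\beta}\times\big(\text{objective of \cref{eqn:rlhf}}\big) = \mathbb{E}_{x\sim\mathcal{D}}\, D_{\text{KL}}[\pitheta\,\|\,\pi^*] - \mathbb{E}_{x\sim\mathcal{D}}\log Z(x).
\end{align*}

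\textbf{Step 3: conclude.} The last term $\mathbb{E}_{x}\log Z(x)$ depends only on $\piref$, $r$ and $\beta$, not on $\theta$. So the two problems have the same maximizers and minimizers, which is the claimed equivalence.

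As a corollary, Gibbs' inequality says the KL term is nonnegative and vanishes iff $\pitheta(\cdot|x)=\pi^*(\cdot|x)$. This recovers that \cref{eqn:optimal_policy} is the solution whenever the policy class contains $\pi^*$.

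The only step needing care is well-definedness. We need $Z(x)<\infty$, which holds because the response space of sequences is discrete and, in practice, length-bounded, or more generally whenever $r$ is bounded. We also need $\pitheta(\cdot|x)\ll\piref(\cdot|x)$ so that the KL terms are finite; otherwise both objectives equal $\pm\infty$ and still agree. I expect no real obstacle beyond stating these mild assumptions.
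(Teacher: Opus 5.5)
Your proposal is correct and is essentially the paper's proof. The paper inverts \cref{eqn:optimal_policy} to write $r(x,y)=\beta\log\frac{\pi^*(y|x)}{\piref(y|x)}+\beta\log Z(x)$ and substitutes this into the objective, obtaining $\beta\,\mathbb{E}_x[\log Z(x)]-\beta\,\mathbb{E}_x[D_{\text{KL}}(\pitheta(\cdot|x)\,\|\,\pi^*(\cdot|x))]$, which is your Step~2 identity multiplied by $-\beta$; your remarks on the finiteness of $Z(x)$ and on absolute continuity are a sensible addition that the paper leaves implicit.
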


All proofs are deferred to Appendix~\ref{app:proof}. LLM post-training normally learns a parameterized reward model $r_{\phi}$ or applies an implicit reward~\cite{rafailov2023direct} $r\propto \log[\pitheta / \piref]$ of the model $\pitheta$ as a proxy for human preference. However, in this paper, we propose a novel problem named \textit{unsupervised reward optimization for PLMs}. Our goal is to develop an \textit{explicit, a priori} score function $r(x, y)$ without \textit{any knowledge} of ground-truth labels for direct optimization on offline data. In the next section, we will present our unique design choices for such rewards along with sampling strategies for constructing the training sets.

\subsection{Task-Agnostic Reward Design and Data Sampling}\label{sec:reward_design}

The guiding principle of our unsupervised reward design is to find a proxy for \textit{controllability}, to measure how well a generated sample aligns with its prompt. 
Moreover, the reward function should be \textit{task-agnostic} for it to be truly unsupervised. Following current progress in URLVR~\cite{he2026how}, various intrinsic and extrinsic rewards induced by the reference policy $\pi_{\text{ref}}$\footnote{This enables offline training and self-improvement of the base model as the reference policy.} are examined. For the former, we investigate the sequence-level negative log-likelihood (predictive entropy) and its length-normalized variant (normalized entropy)
\vspace{-5pt}
\begin{align}
   \mathcal{H}_{\pi}(x, y) = -\sum_{i=1}^{|y|}\log\pi_{\text{ref}}(y_i|y_{<i}, x),
 \quad \tilde{\mathcal{H}}_{\pi}(x, y) = -\frac{1}{|y|}\sum_{i=1}^{|y|}\log\pi_{\text{ref}}(y_i|y_{<i}, x)\label{eqn:entropy} 
\end{align}
\vspace{-\baselineskip}

as measures of the model's uncertainty in the generated sequence $y$ given prompt $x$, which is widely adopted in entropy-based RL~\cite{cui2025entropy, ziebart2008maximum, haarnoja2017reinforcement, haarnoja2018soft}. We define the intrinsic reward $r_i(x, y)\triangleq -\mathcal{H}_{\pi}(x, y)$ or $-\tilde{\mathcal{H}}_{\pi}(x, y)$ to steer model towards more confident generations. For the latter, inspired by the notion of "semantic equivalence" in language tasks~\cite{kuhnsemantic, farquhar2024detecting}, we obtain protein embeddings using an off-the-shelf representation model $q(z|y)$ and evaluate the  distance between generated sequences
\vspace{-5pt}
\begin{align}
    \bar{d}_q (x, y) &= \mathbb{E}_{y'\sim\pi_{\text{ref}}(\cdot|x)} [d(q(y), q(y'))] 
    \simeq \frac{1}{N}\sum_{i=1}^N d(q(y), q(y^{(i)})), \quad \text{where} \,\,\,  y^{(i)} \sim \piref(\cdot \mid x),\label{eqn:semantic_dist}
\end{align}
\vspace{-\baselineskip}

$d$ can be any well-defined metric on the latent space, which provides a measure of semantic distance\footnote{For proteins, we hypothesize that this semantic distance serves as a faithful measure of the evolutionary, functional and structural similarity between sequences, given that the representation model $q$ is well-trained.} between generated samples. Intuitively, if a generated sequence is semantically distant from other generations conditioning on the same prompt, it is highly likely to be a statistical outlier or "hallucination" of the model. Therefore, we propose to define the extrinsic reward $r_e(x, y)\triangleq -\bar{d}_q (x, y)$ as the \textit{negative semantic distance} to encourage consistent generations. Our construction scales linearly with $\mathcal{O}(N)$ inference calls to model 
$q$, significantly improving upon prior LLM-based approaches~\cite{zhang2025right, kuhnsemantic, farquhar2024detecting} that rely on pairwise equivalence checks and exhibit complexity $\mathcal{O}(N^2)$.


Beyond designing proxy rewards, we further study temperature as a key factor in both offline data construction and reward labeling. Our rationale is two-fold. First, it's well-known that LLM generation statistics are highly sensitive to sampling temperatures~\cite{chen2021evaluating,zhu2024hot, zhang2024edt}, and in the same vein, PLMs could benefit from a multi-temperature sampling strategy to create more diverse offline dataset for unsupervised reward optimization (verified in~\Cref{app:ablation}). Second, we empirically observe that the proposed metrics in~\cref{eqn:entropy,eqn:semantic_dist} exhibit distinct correlation patterns with ground-truth labels at different temperatures. Consequently, for explicit reward optimization, the most strongly correlated metric should be selected at each temperature to minimize the noise of the reward signal. Therefore, given a prompt set $\mathcal{X}$ and temperature set $\mathcal{T}$, we define our multi-temperature sampling and reward labeling strategy for constructing offline training set $\mathcal{D}$ as follows:
\begin{align}
    \mathcal{D} = \bigcup_{x \in \mathcal{X}} \bigcup_{T \in \mathcal{T}} \left\{ \left(x, y^{(i)}, r_T(x, y^{(i)}) \right)\right\}_{i=1}^{N},
\quad \text{where} \,\,\, y^{(i)} \sim \piref(\cdot \mid x; T),\label{eqn:dataset}
\end{align}
\vspace{-\baselineskip}

$r_T$ is the best-performing reward at temperature $T$. We defer our implementation of $\mathcal{D}$ to~\Cref{sec:benchmark_rewards} since the optimal design choice is contingent on empirical evaluation. However, the final reward functions (one continuous and one binary) achieve consistent correlations with controllability metrics and deliver robust model improvements across conditional generation tasks (\Cref{tab:family-wise-stat}) and model scales (\Cref{tab:performance}), highlighting the generality and effectiveness of our task-agnostic design.

\subsection{Soft Reward Optimization (SRO)}\label{sec:SRO}
To see how a continuous reward can be maximized explicitly, substitute \cref{eqn:optimal_policy} into \Cref{thm:reward_optimization}, the KL objective $\mathcal{L}_{\text{KL}}$ becomes
\vspace{-8pt}
\begin{align}
    \underset{\pi_{\theta}}{\min}\, \mathcal{L}_{\text{KL}} &= 
    \underset{\pi_{\theta}}{\min}\,\mathbb{E}_{x\sim\mathcal{D}}\mathbb{E}_{y\sim\pi_{\theta}(y|x)}\left[\log\frac{\pi_{\theta}(y|x)}{\pi^*(y|x)}\right] \nonumber\\
    &\equiv  \underset{\pi_{\theta}}{\min}\,\mathbb{E}_{x\sim\mathcal{D}}\mathbb{E}_{y\sim\pi_{\text{ref}}(y|x)}\left[\frac{\pi_{\theta}}{\pi_{\text{ref}}}\left(\log\frac{\pi_{\theta}}{\pi_{\text{ref}}} - \frac{r(x, y)}{\beta}\right)\right]\label{eqn:IS}\\
    &\equiv  \underset{\pi_{\theta}}{\min}\,\mathbb{E}_{x\sim\mathcal{D}}\mathbb{E}_{y\sim\pi_{\text{ref}}(y|x)}\left[\frac{\exp(\frac{r_{\theta}(x, y)}{\beta})}{Z(x)}\left(\log\frac{\exp(\frac{r_{\theta}(x, y)}{\beta})}{Z(x)} - \log\frac{\exp(\frac{r(x, y)}{\beta})}{Z(x)}\right)\right]\label{eqn:reward_KL}
\end{align}
\vspace{-\baselineskip}

where the expectation of $\log Z(x)$ can be dropped in \cref{eqn:IS} or added in \cref{eqn:reward_KL} since it does not depend on $\pi_{\theta}$,  and $r_{\theta}(x, y)$ is the implicit reward parameterized by the model $\pi_{\theta}$~\cite{rafailov2023direct}
\begin{align}
    r_{\theta}(x, y) = \beta\log\frac{\pi_{\theta}(y|x)}{\pi_{\text{ref}}(y|x)} + \beta\log Z(x).\label{eqn:implicit_reward}
\end{align}
\vspace{-\baselineskip}

Since \Cref{eqn:reward_KL} can be interpreted as the KL-divergence between the distributions induced by the implicit reward $r_\theta(x, y)$ and explicit reward $r(x, y)$, prior methods~\cite{widatalla2024aligning,stocco2024guidinggenerativeproteinlanguage} approximate it using batch-normalized distributions $\exp(r_{\theta}(x, y^i))/\sum_i \exp(r_{\theta}(x, y^i))$ and  $\exp(r(x, y^i))/\sum_i \exp(r(x, y^i))$ up to a scaling factor $\beta$. \textit{To circumvent the batch effect}\footnote{In~\Cref{tab:hyperparameter}, we show that SRO variants substantially outperform this "weighted DPO" introduced by~\cite{widatalla2024aligning,stocco2024guidinggenerativeproteinlanguage}.}, we instead propose to directly estimate the RHS of \cref{eqn:IS} on $\mathcal{D}$ via importance-weighted Monte Carlo sampling:
\begin{align}
    \mathcal{L}_{\text{SRO}} &= \mathbb{E}_{x\sim\mathcal{D}}\mathbb{E}_{y\sim\pi_{\text{ref}}(y|x)}\left[\frac{\pi_{\theta}}{\pi_{\text{ref}}}\left(\log\frac{\pi_{\theta}}{\pi_{\text{ref}}} - \frac{r(x, y)}{\beta}-\alpha\right)\right]\nonumber\\
    &\simeq \frac{1}{|\mathcal{D}|
    }\sum_{i=1}^{|\mathcal{D}|}\frac{\pi_{\theta}(y^i|x^i)}{\pi_{\text{ref}}(y^i|x^i)}\left(\log\frac{\pi_{\theta}(y^i|x^i)}{\pi_{\text{ref}}(y^i|x^i)}-\frac{r(x^i, y^i)}{\beta}-\alpha\right)\label{eqn:SRO}
\end{align}
\vspace{-\baselineskip}

where $\alpha$ is a multiplier that constrains the overall probability mass of $\pitheta$ on $\mathcal{D}$. \Cref{eqn:SRO} operates as the objective of our proposed \textbf{soft reward optimization (SRO)}. 

\begin{theorem}\label{thm:SRO_optimality}
    Given a reward function $r(x, y)$, SRO in~\Cref{eqn:SRO} has a closed-form solution
    \begin{align}
        \pitheta^* = \pi_{\mathrm{ref}}\exp\left(\frac{r(x, y)}{\beta}+\alpha-1\right).
    \end{align}
\end{theorem}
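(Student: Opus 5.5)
The plan is to treat the population version of the SRO objective (the first line of \cref{eqn:SRO}) as a functional of the ratio $w(y|x) \triangleq \pitheta(y|x)/\piref(y|x)$ and minimize it pointwise. Writing the expectation over $y\sim\piref(\cdot|x)$ as a sum, the loss becomes
\begin{align*}
\mathcal{L}_{\text{SRO}} = \mathbb{E}_{x\sim\mathcal{D}}\sum_{y}\piref(y|x)\, w(y|x)\left(\log w(y|x) - \frac{r(x,y)}{\beta} - \alpha\right).
\end{align*}
The key observation is that $\alpha$ plays the role of a fixed multiplier rather than an unknown to be solved for. The normalization of $\pitheta$ is therefore not imposed as a hard constraint; the term $-\alpha\sum_y \pitheta(y|x)$ is exactly the Lagrangian penalty on total mass. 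With that reading, the objective decouples across pairs $(x,y)$ with $\piref(y|x)>0$, and each summand is a function of the single scalar $w = w(y|x) \ge 0$ weighted by $\piref(y|x)>0$.

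Next I minimize, for fixed $(x,y)$, the scalar function $f(w) = w\log w - c\,w$ with $c = r(x,y)/\beta + \alpha$. Its derivative is $f'(w) = \log w + 1 - c$, and setting this to zero gives $w^* = \exp(c-1)$. Since $f''(w) = 1/w > 0$ on $(0,\infty)$, $f$ is strictly convex, so this stationary point is the unique global minimizer. The boundary $w\to 0^+$ cannot compete, because $f'(w)\to-\infty$ there and $f(0)=0$ exceeds the value $f(w^*) = -e^{c-1}<0$. Substituting back gives $\pitheta^*(y|x) = \piref(y|x)\exp\bigl(r(x,y)/\beta + \alpha - 1\bigr)$, which is the claimed form. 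As a consistency check, if $\alpha$ is chosen so that $\pitheta^*$ normalizes, then $e^{\alpha-1} = 1/Z(x)$ and the solution recovers $\pi^*$ from \cref{eqn:optimal_policy}, in line with \Cref{thm:reward_optimization}.

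The main obstacle is interpretive rather than computational. One has to justify that minimizing over $\pitheta$ may be done pointwise in $w$, ignoring the simplex constraint (equivalently, assuming a sufficiently expressive parameterization). One also has to handle $y$ outside the support of $\piref$ (or of $\mathcal{D}$), where the objective places no constraint. I would state the result on the support of $\piref$, treat $\alpha$ as a given Lagrange multiplier, and note that off-support values are unconstrained.
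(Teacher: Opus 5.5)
Your proposal is correct and follows essentially the same route as the paper: both minimize the population SRO integrand pointwise in a single scalar. You use the ratio $w=\pi_\theta/\pi_{\mathrm{ref}}$ where the paper uses $u=\pi_\theta$, and the first-order condition gives $\exp(r/\beta+\alpha-1)$ in either case. Your strict-convexity and boundary check, and your explicit statement that the simplex constraint is relaxed with $\alpha$ as a fixed multiplier, fill in steps the paper's proof leaves implicit, since it only sets the derivative to zero.
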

\vspace{-\baselineskip}

For implementation, we sample a group of responses $\{y^j\}_{j=1}^G$ for each prompt $x^i$ with $\pi_{\text{ref}}$. We \textit{normalize}\footnote{In principle, this normalization is valid for algorithms like SRO which use global labels rather than groupwise or pairwise labels, \textit{only if} the overall and the promptwise reward statistics are consistent (e.g., exhibit similar high correlation with the ground-truth label), which is verified in~\Cref{sec:benchmark_rewards}.} the reward as an advantage function following RLVR methods like GRPO~\cite{shao2024deepseekmath}
\begin{align}
    \hat{r}(x, y) = \frac{r(x, y) - \text{mean}_y(r(x, y))}{\text{std}_y(r(x, y))}.\label{eqn:reward_norm}
\end{align}

\noindent By~\Cref{thm:SRO_optimality}, we choose $\alpha=1$ to ensure that $\pitheta^*(y|x)=\piref(y|x)$ for $y$ that receives zero (average) reward\footnote{Note that $\alpha=1$ does not strictly preserve the probability simplex constraint $\sum_y\pi_\theta(y|x)=1$ since we are making estimations on a finite, offline dataset. This choice empirically proves to be robust and effective in our experiments.}.


\subsection{Binarized Reward Optimization (BRO)}\label{sec:BRO}

In practice, as a proxy for ground-truth labels, the reward signal $r(x, y)$ can be noisy and inaccurate (see~\Cref{sec:benchmark_rewards}), which may render SRO unreliable. To this end, previous methods such as DPO~\cite{rafailov2023direct} circumvent the need for explicit reward by adopting the Bradley-Terry formalism~\cite{bradley1952rank} for preference distribution modeling. However, DPO and its variants require strictly ranked samples. As a more general alternative, we derive a novel optimization objective by leveraging \textit{unpaired} binary rewards introduced by KTO~\cite{ethayarajh2024model}. Specifically, we assume that instead of assigning a continuous score, the generated response $y$ can be classified as positive ($y\in\mathcal{D}^+$) or negative ($y\in\mathcal{D}^-$) in terms of alignment with its prompt $x$. This formulation, termed \textbf{binarized reward optimization (BRO)}, can be interpreted as an extreme case of $\mathcal{L}_{\text{SRO}}$ in the limit $\beta\to 0$:
\begin{align}
    \mathcal{L}_{\text{BRO}} &\triangleq \lim_{\beta\to 0}\mathcal{L}_{\text{SRO}}\nonumber\\
    &= \mathbb{E}_{x\sim\mathcal{D}}\mathbb{E}_{y\sim\pi_{\text{ref}}(y|x)}\left[\frac{\pi_{\theta}}{\pi_{\text{ref}}}\left(\log\frac{\pi_{\theta}}{\pi_{\text{ref}}} - \frac{r(x, y)}{\beta}\right)\right]\nonumber\\
    &= \mathbb{E}_{(x, y)\sim \mathcal{D}^+\cup \mathcal{D}^-}\left[\frac{\pi_{\theta}}{\pi_{\text{ref}}}\left(\log\frac{\pi_{\theta}}{\pi_{\text{ref}}} - r_{\text{BRO}}(x, y)\right)\right]
\end{align}
\vspace{-\baselineskip}

where
\begin{align}
r_{\text{BRO}}(x, y) =
\begin{cases}
  \infty,   & \text{if } \, y \in \mathcal{D}^+, \\
 -\infty,   & \text{if } \, y \in \mathcal{D}^-. \\
\end{cases}
\end{align}
\vspace{-\baselineskip}

To make $\mathcal{L}_{\text{BRO}}$ tractable, inspired by pre-existing methods~\cite{widatalla2024aligning,stocco2024guidinggenerativeproteinlanguage} that approximate the two probabilities in~\Cref{eqn:reward_KL} by their batch-normalized softmax variants, we propose optimizing their normalized variants in the binary label space:
\begin{align}
    \pi_{\theta, \text{BRO}} &\triangleq \sigma\left(\log\frac{\pi_{\theta}}{\pi_{\text{ref}}}\right) = \frac{\pi_{\theta}}{\pi_{\text{ref}} + \pi_{\theta}},\\
    \pi^*_{\text{BRO}} &\triangleq \sigma\left(r_{\text{BRO}}\right) = 
        \begin{cases}
          1,   & \text{if } y \in \mathcal{D}^+, \\
          0,   & \text{if } y \in \mathcal{D}^-. \\
\end{cases}
\end{align}
\vspace{-\baselineskip}

Optimizing $D_{\text{KL}}( \pi_{\theta, \text{BRO}}\,\|\,  \pi^*_{\text{BRO}})$ as a proxy for $D_{\text{KL}}(\pi_{\theta}\,\|\,\pi^*)$
requires constraining the probability mass of $\pitheta$ on $\mathcal{D}$ as in~\Cref{eqn:SRO}. Alternatively, we propose to directly minimize $D_{\text{KL}}(\pi^*_{\text{BRO}} \,\|\,  \pi_{\theta, \text{BRO}})$ instead:
\begin{align}
    \mathcal{L}_{\text{BRO}} &\simeq D_{KL}(\pi^*_{\text{BRO}} \,\|\,  \pi_{\theta, \text{BRO}})\nonumber\\
    &= H(\pi^*_{\text{BRO}}, \pi_{\theta, \text{BRO}}) - \cancel{H(\pi^*_{\text{BRO}})}\nonumber\\
    &= -\mathbb{E}_{(x, y)\sim\mathcal{D}}\left[\mathbbm{1}\{y\in\mathcal{D}^+\}\log\frac{\pi_{\theta}}{\pi_{\text{ref}} + \pi_{\theta}} + \left(1-\mathbbm{1}\{y\in\mathcal{D}^+\}\right)\log\frac{\pi_{\text{ref}}}{\pi_{\text{ref}} + \pi_{\theta}}\right]\label{eqn:obj_bro}
\end{align}
\vspace{-\baselineskip}

where $\mathbbm{1}(\cdot)$ is the indicator function, which is precisely the logistic regression loss for binary classification. From a game-theoretic view, the reward maximization problem induced by $\mathcal{L}_{\text{BRO}}$ in Eq.~\ref{eqn:obj_bro} can be interpreted as a zero-sum game where the learned policy $\pitheta$ competes against the reference policy $\piref$ over a labeled dataset $\mathcal{D} = \mathcal{D}^+ \cup \mathcal{D}^-$. For a detailed derivation, please see \Cref{app:game_BRO}.

\vspace{-5pt}
\section{Experiments}\label{sec:experiments}

We present an empirical study of the proposed \textit{unsupervised reward optimization} framework for PLMs through two main experimental components: (1) evaluating the proposed statistical metrics in~\Cref{sec:reward_design} for reward labeling, and (2) benchmarking PLMs fine-tuned by SRO/BRO on novel compositional OOD prompt generalization tasks with rewards from part (1). Our proposed methods show competitive performance despite being trained with noisy supervision, and in certain regimes achieve performance comparable to that of the oracle model fine-tuned with ground-truth labels.

\subsection{Experimental Setup}


Benchmarking the intrinsic and extrinsic rewards proposed in~\Cref{sec:reward_design} requires (1) a prompt set $\mathcal{X}$ as conditions, (2) a base model $\piref$ to sample protein sequences $\mathcal{Y}$, (3) a representation model $q$ to compute semantic distance, and (4) a ground-truth labeler $l: \mathcal{X} \times \mathcal{Y} \rightarrow \mathbb{R}$ for evaluating alignment with the prompt. To this end, we consider two frontier PLMs with customized setups:

\begin{itemize}[left=0pt, itemindent=1em, labelwidth=1em, labelsep=0.5em, align=left]

\item \textbf{ProGen2}~\cite{nijkamp2023progen2}: An autoregressive PLM pre-trained over one billion natural protein sequences and can be finetuned to generate highly specialized functional protein by instruction tuning. We use a 151M version called progen2-small-mix7 as the primary reference model, which is fine-tuned by~\citet{10821712} via SFT on a dataset of 7 Pfam~\cite{finn2014pfam} protein families. For a detailed description of these Pfam proteins, please see~\Cref{app:promptset}. The SFT prompts consist of special family tokens like \texttt{<|pf00002|>}. Following ESM3~\cite{hayes2025simulating}, we apply InterProScan~\cite{paysan2023interpro} as the ground-truth labeler by letting $l(x, y) = 1$ if the Pfam family keyword is recovered by InterProScan and $0$ otherwise. We also train a 764M progen2-medium-mix7 model via SFT on the same dataset for scaling experiments. 
\item \textbf{ESM3}~\cite{hayes2025simulating}: A state-of-the-art multimodal PLM with three tracks (sequence, structure and function) and pretrained via masked language modeling in~\Cref{eqn:pretrain_obj}. Due to its multimodal nature, ESM3 can perform diverse conditional generation tasks with customized functional and structural primitives. For Func2Seq tasks, we follow the same procedure as ProGen2 and use the keyword (InterPro entries) recovery rate by InterProScan as the ground-truth label. For Struct2Seq tasks, namely inverse folding~\cite{hsu2022learning, dauparas2022robust}, we predict the structure of generated sequence by ESMfold~\cite{lin2023evolutionary} and compute the backbone cRMSD against the prompted structure. The target structure is considered recovered if the $\text{cRMSD} < \SI{2}{\angstrom}$. The only open-weight model, 1.4B esm3-sm-open-v1, is used in this study.
\end{itemize}

\vspace{-6pt}
\textbf{Prompt and Dataset Construction} \,\,\, Since no prior work considers the exact setting of unsupervised reward optimization for PLMs, we introduce novel prompt sets for biologically meaningful Func2Seq and Struct2Seq tasks. For ESM3, we randomly sample 1000 proteins from the \textit{Dracunculus medinensis} (\textbf{DRAME})\footnote{We chose this dataset to prevent data leakage as \textit{Dracunculus medinensis} was added to AlphaFoldDB since v5 release, whereas ESM3 reported using v4 for training.} subset of AlphaFoldDB~\cite{varadi2022alphafold}, and employ their InterPro entries and 3D structures as the Func2Seq and Struct2Seq prompts respectively. The ESM3 prompt sets are used for benchmarking unsupervised rewards only. For ProGen2, based on the 7 Pfam control tags of progen2-small-mix7, we concatenate each family token with the first 10 residues of sequences from \textit{another} family to construct a novel OOD prompt set. For fine-tuning, the training and testing set comprise 1400 (\textbf{Pfam1400}) and 700 (\textbf{Pfam700}) prompts in total, with 200 and 100 entries from each Pfam family. These prompts are used for both reward benchmarking and BRO/SRO post-training to evaluate the \textit{compositional generalization}~\cite{kirk2023survey, yang2024exploring} of our proposed framework\footnote{Biologically, this setup can be interpreted to test the model's controllability as well as  creativity in "reprogramming" functional context from one protein family onto the N-terminal scaffold of another.}. Our full datasets $\mathcal{D}$ are sampled over five temperatures $T\in\{0.3, 0.5, 0.7, 1.0, 1.5\}$ according to~\Cref{eqn:dataset} with $N=64$, and therefore contain  $448k$ and $224k$ samples, respectively.  We apply a state-of-the-art representation PLM ESMC-600M~\cite{esm2024cambrian} for generating protein embeddings and extrinsic rewards. An alternative model is also assessed and produces consistent results, see details in~\Cref{app:reward_design}.




\vspace{-6pt}
\subsection{Benchmarking Unsupervised Rewards}\label{sec:benchmark_rewards}
\vspace{-4pt}

\begin{figure}[tbp]
    \centering
    \includegraphics[width=.94\linewidth]{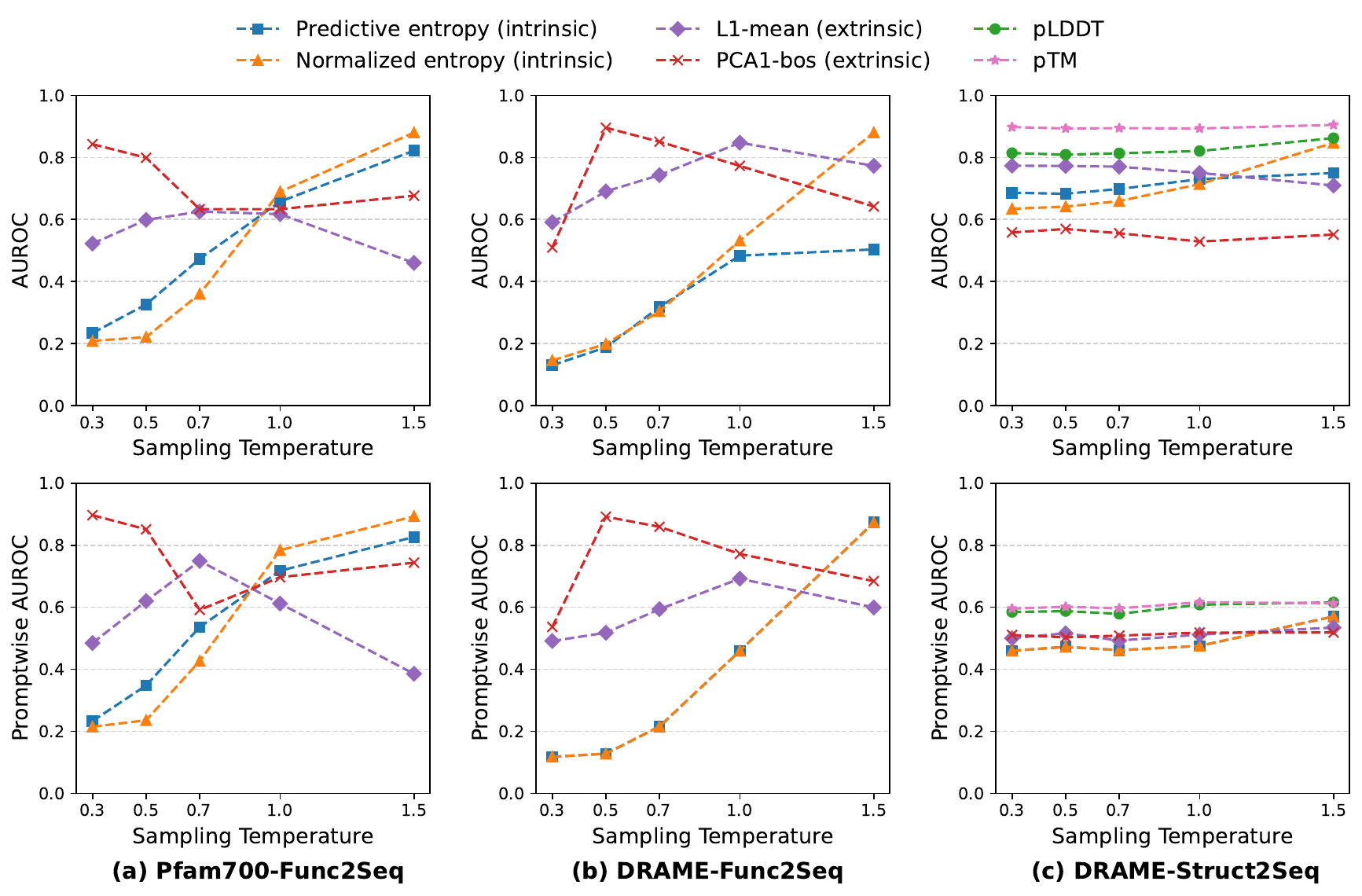}
    \vspace{-10pt}
    \caption{\textbf{Correlations between intrinsic/extrinsic rewards and the ground-truth label in terms of AUROC} on (a) Pfam700 prompt set for Func2Seq task, and DRAME prompt set for (b) Func2Seq and (c) Struct2Seq tasks, across five sampling temperatures. Promptwise AUROC evaluates the correlation within the generated sequences for each prompt.}
    \label{fig:reward_metrics}
    \vspace{-16pt}
\end{figure}

We examine our proposed unsupervised rewards in~\Cref{sec:reward_design} on three combinations of prompt sets and tasks: Pfam700-Func2Seq, DRAME-Func2Seq and DRAME-Struct2Seq. For extrinsic rewards defined in~\Cref{eqn:semantic_dist}, we evaluate a broad spectrum of metrics $d$ (see details in~\Cref{app:reward_design}) and select two representatives for illustration in~\Cref{fig:reward_metrics}: L1-mean and PCA1-bos.



For the Func2Seq tasks shown in~\Cref{fig:reward_metrics}(a) and (b), all metrics exhibit consistent trends across different PLMs and prompt sets. The entropy-based intrinsic rewards increase monotonically in temperature, starting from strongly negative correlation with the ground truth at $T\le 0.7$ and drastically reaching high positive correlation at $T> 1.0$. This suggests that both ProGen2 and ESM3 models are \textit{over-confident} at low temperatures, leading to degenerate and repetitive sequences with high probability~\cite{welleckneural, holtzmancurious} and \textit{under-confident} at high temperatures, generating incoherent and random outputs. Intriguingly, as observed in~\Cref{tab:post_training_result,app:critical_temp}, the "critical temperature" at which the models transition from being over-confident to under-confident, which is $T\sim 0.7$ for ProGen2 and $T\sim 1.0$ for ESM3, coincides with the best-performing temperature. However, at this critical temperature, intrinsic rewards are close to random, offering no predictive power.

Compared to predictive and normalized entropy, the extrinsic rewards exhibit more nuanced patterns. For PCA1-bos, the projection is up to a sign, which we always choose for it to be positively correlated with the ground truth\footnote{We empirically observe that at lower $T$, PCA1 effectively separates generated sequences into feasible-looking ones and degenerate gibberish, which aligns with a recent study~\cite{prabakaran2026quantifying}. Therefore, the sign of PCA can be determined by assigning the projection score of the non-biological sequences as negative, without any knowledge of ground-truth labels.}.
Therefore, the AUROCs for PCA1-bos are always above $0.5$, while generally decreasing in temperature. In contrast, L1-mean always peaks at the critical temperature, making it a \textit{great complement} to other metrics, especially the intrinsic uncertainty measures.

For Struct2Seq tasks in~\Cref{fig:reward_metrics}(c), we also consider pLDDT and pTM~\cite{jumper2021highly, lin2023evolutionary} from ESMfold prediction as competitive baselines. Compared to Func2Seq tasks, all proposed metrics demonstrate positive correlations with the ground truth and much more stable trends w.r.t. temperature. Despite being more computationally efficient\footnote{For our experiments with a single A40 GPU, the average inference time per sequence for ESMC-600M is less than $0.1$s, whereas ESMfold takes more than $2.0$s.}, they underperform relative to pLDDT and pTM. Moreover, unlike in Func2Seq tasks, the promptwise correlations for all metrics, including pLDDT and pTM, deteriorate markedly. We interpret this as evidence that correlations in Struct2Seq tasks are largely prompt-dependent: some structures are inherently easier to recover than others. This also implies that, for many prompts, all generated sequences are infeasible, as we empirically confirm, leaving few positive samples for self-improvement. As a result, we exclude Struct2Seq tasks from the subsequent post-training study, while retaining them as diagnostic evidence for our task-agnostic reward design.



Based on the analysis above, we instantiate the multi-temperature sampling and reward-labeling strategy in~\Cref{eqn:dataset} by defining a piecewise reward
function $r_T$ w.r.t. the critical temperature $T_c$: 
\begin{align}
    r_T = 
    \begin{cases}
    -\bar{d}_{\text{PCA1-bos}},   & \text{if } \, T < T_c \\
    -\bar{d}_{\text{L1-mean}},   & \text{if } \, T = T_c \\
    -\tilde{H}_{\pi},  & \text{if } \, T > T_c
    \end{cases}\label{eqn:SRO_reward}
\end{align}
\vspace{-\baselineskip}


which is motivated by the diagnostic correlations in~\Cref{fig:reward_metrics}. See~\Cref{app:ablation} for the ablation study of design choices. For each prompt, we sample $N=64$ sequences and retain only the top-$4$ and bottom-$4$ for post-training to reduce reward noise. Moreover, this construction naturally produces positive and negative samples for BRO/DPO/KTO. More detail can be found in~\Cref{app:model_training}.

\vspace{-5pt}
\subsection{Compositional OOD Prompt Generalization}

To demonstrate the effectiveness of our proposed framework, we fine-tune two SFT models, Progen2-small-mix7 and Progen2-medium-mix7, via SRO/BRO (ours) and two competitive baselines (DPO~\cite{rafailov2023direct} and KTO~\cite{ethayarajh2024model}) with the unsupervised rewards in~\Cref{eqn:SRO_reward}. For binary rewards used by BRO, DPO and KTO, we find that adopting L1-mean alone regardless of $T$ produces robust improvements for all post-training methods (\Cref{tab:ablation_reward}), which we report here. As oracle baselines, we also train base models with ground-truth reward via BRO, since the labels are inherently binary.

\begin{table}[!tbhp]
  \centering
  \caption{\textbf{Keyword recovery on Pfam700 Func2Seq tasks}. We report pass@1 success rate with top-$k$ decoding, where $k=15$. For each temperature, the
  best model except Oracle is \textbf{bolded} and the second best is \underline{underlined}. More details can be found in~\Cref{app:model_training}.}
  \label{tab:performance}
  \resizebox{\linewidth}{!}{%
  \begin{tabular}{lcccccc}
  \toprule
   & {$T=0.3$} & {$T=0.5$} & {$T=0.7$} & {$T=1.0$} & {$T=1.5$} & Average \\
  \midrule
  \multicolumn{7}{l}{\textbf{151M model}} \\
  \midrule
  Progen2-small-mix7 & 0.210 & 0.288 & 0.390 & 0.401 & 0.118 & 0.281 \\
  Progen2-small-mix7 DPO & 0.283 & 0.395 & \textbf{0.512} & 0.435 & 0.122 & 0.350\\
  Progen2-small-mix7 KTO & 0.272 & 0.347 & 0.430 & 0.458 & 0.175 & 0.337 \\
  \rowcolor{urorow}
  Progen2-small-mix7 BRO & \textbf{0.406} & \underline{0.445} & 0.494 & \textbf{0.520} & \underline{0.298} & \textbf{0.433} \\
  \rowcolor{urorow}
  Progen2-small-mix7 SRO & \underline{0.383}  & \textbf{0.455} & \underline{0.498} & \underline{0.499} &  \textbf{0.301} &  \underline{0.427} \\
  Progen2-small-mix7 Oracle & 0.537 & 0.573 & 0.579 & 0.506 & 0.247 & 0.488 \\
  \midrule
  \multicolumn{7}{l}{\textbf{764M model}} \\
  \midrule
  Progen2-medium-mix7 & 0.383 & 0.471 & 0.567 & 0.428 & 0.074 & 0.385 \\
  Progen2-medium-mix7 DPO & 0.591 & 0.693 & 0.670 & 0.379 & 0.067 & 0.480 \\
  Progen2-medium-mix7 KTO &  0.507 & 0.620 & 0.681 & 0.557 & 0.159 &  0.505\\
  \rowcolor{urorow}
  Progen2-medium-mix7 BRO & \textbf{0.678} & \underline{0.697} & \underline{0.701} & \underline{0.645} & \textbf{0.342} & \textbf{0.613} \\
  \rowcolor{urorow}
  Progen2-medium-mix7 SRO & \underline{0.655} & \textbf{0.706} & \textbf{0.708} & \textbf{0.648} & \underline{0.292} & \underline{0.602} \\
  Progen2-medium-mix7 Oracle & 0.661 & 0.700 & 0.704 & 0.642 & 0.308 & 0.603 \\
  \bottomrule
  \end{tabular}%
  }
  \label{tab:post_training_result}
  \end{table}

Shown in~\Cref{tab:post_training_result,tab:family-wise-stat}, both SRO and BRO significantly improve the generalization of the base model on the challenging compositional OOD prompt set \textbf{Pfam700} across temperatures, model scales and protein families.
 They outperform DPO and KTO by a substantial margin and, in some regimes, achieve performance comparable to the Oracle. For example, for the 764M base model, BRO attains
  an average recovery rate of $61.3\%$, compared with $60.3\%$ for the Oracle. Moreover,  we conduct pass@k analysis~\cite{chen2021evaluating} in~\Cref{fig:pass@k} to further dissect the nature of the improvements conferred by BRO and SRO. We observe that they generally boost pass@k scores for small $k$ yet converge with the base model at high $k$ values. This implies that BRO/SRO incentivize the model to prioritize and consistently generate patterns that align with the prompts, rather than instilling fundamentally novel ones, reconfirming recent studies on post-training reasoning LLMs~\cite{zhang2025right, wu2024reft, yuedoes,  songmind}. However, DPO, despite underperforming BRO/SRO at small $k$, exhibits stronger performance compared to the base model for nearly all $k$ values, suggesting that it not only enhances the probability of successful generations, but also pushes the boundary of the model's compositional generalization capabilities. Overall, these observations highlight the effectiveness and potential of our refined strategy for reward labeling, data sampling and algorithm design. 
\begin{figure}[bh]
    \centering
    \includegraphics[width=\linewidth]{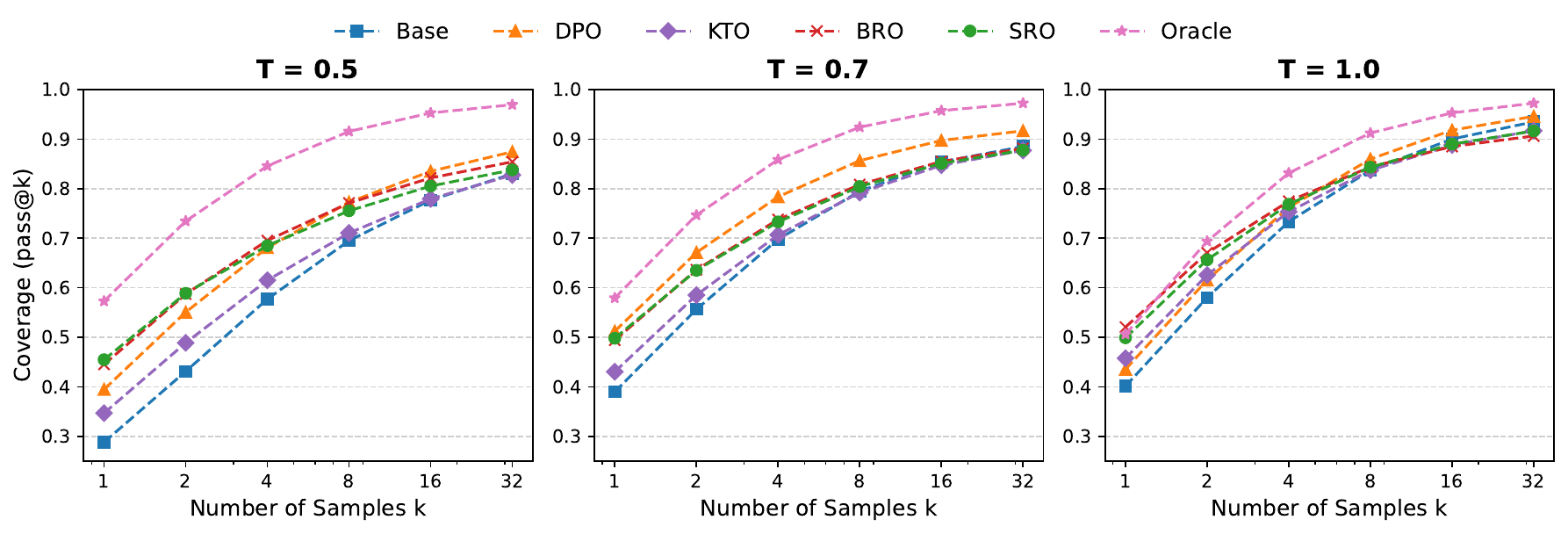}
    \caption{\textbf{Pass@k curves for Progen2-small-mix7 on Pfam700 Func2Seq tasks.} For each prompt, $N=64$ sequences are sampled for evaluation.}
    \label{fig:pass@k}
    \vspace{-\baselineskip}
\end{figure}


\bibliography{reference}

@article{vaswani2017attention,
  title={Attention is all you need},
  author={Vaswani, Ashish and Shazeer, Noam and Parmar, Niki and Uszkoreit, Jakob and Jones, Llion and Gomez, Aidan N and Kaiser, {\L}ukasz and Polosukhin, Illia},
  journal={Advances in neural information processing systems},
  volume={30},
  year={2017}
}

@inproceedings{devlin2019bert,
  title={Bert: Pre-training of deep bidirectional transformers for language understanding},
  author={Devlin, Jacob and Chang, Ming-Wei and Lee, Kenton and Toutanova, Kristina},
  booktitle={Proceedings of the 2019 conference of the North American chapter of the association for computational linguistics: human language technologies, volume 1 (long and short papers)},
  pages={4171--4186},
  year={2019}
}

@article{finn2014pfam,
  title={Pfam: the protein families database},
  author={Finn, Robert D and Bateman, Alex and Clements, Jody and Coggill, Penelope and Eberhardt, Ruth Y and Eddy, Sean R and Heger, Andreas and Hetherington, Kirstie and Holm, Liisa and Mistry, Jaina and others},
  journal={Nucleic acids research},
  volume={42},
  number={D1},
  pages={D222--D230},
  year={2014},
  publisher={Oxford University Press}
}

@article{varadi2022alphafold,
  title={AlphaFold Protein Structure Database: massively expanding the structural coverage of protein-sequence space with high-accuracy models},
  author={Varadi, Mihaly and Anyango, Stephen and Deshpande, Mandar and Nair, Sreenath and Natassia, Cindy and Yordanova, Galabina and Yuan, David and Stroe, Oana and Wood, Gemma and Laydon, Agata and others},
  journal={Nucleic acids research},
  volume={50},
  number={D1},
  pages={D439--D444},
  year={2022},
  publisher={Oxford University Press}
}

@article{paysan2023interpro,
  title={InterPro in 2022},
  author={Paysan-Lafosse, Typhaine and Blum, Matthias and Chuguransky, Sara and Grego, Tiago and Pinto, Beatriz L{\'a}zaro and Salazar, Gustavo A and Bileschi, Maxwell L and Bork, Peer and Bridge, Alan and Colwell, Lucy and others},
  journal={Nucleic acids research},
  volume={51},
  number={D1},
  pages={D418--D427},
  year={2023},
  publisher={Oxford University Press}
}

@article{jumper2021highly,
  title={Highly accurate protein structure prediction with AlphaFold},
  author={Jumper, John and Evans, Richard and Pritzel, Alexander and Green, Tim and Figurnov, Michael and Ronneberger, Olaf and Tunyasuvunakool, Kathryn and Bates, Russ and {\v{Z}}{\'\i}dek, Augustin and Potapenko, Anna and others},
  journal={nature},
  volume={596},
  number={7873},
  pages={583--589},
  year={2021},
  publisher={Nature Publishing Group UK London}
}

@article{baek2021accurate,
  title={Accurate prediction of protein structures and interactions using a three-track neural network},
  author={Baek, Minkyung and DiMaio, Frank and Anishchenko, Ivan and Dauparas, Justas and Ovchinnikov, Sergey and Lee, Gyu Rie and Wang, Jue and Cong, Qian and Kinch, Lisa N and Schaeffer, R Dustin and others},
  journal={Science},
  volume={373},
  number={6557},
  pages={871--876},
  year={2021},
  publisher={American Association for the Advancement of Science}
}

@article{lin2023evolutionary,
  title={Evolutionary-scale prediction of atomic-level protein structure with a language model},
  author={Lin, Zeming and Akin, Halil and Rao, Roshan and Hie, Brian and Zhu, Zhongkai and Lu, Wenting and Smetanin, Nikita and Verkuil, Robert and Kabeli, Ori and Shmueli, Yaniv and others},
  journal={Science},
  volume={379},
  number={6637},
  pages={1123--1130},
  year={2023},
  publisher={American Association for the Advancement of Science}
}

@article{kortemme2024novo,
  title={De novo protein design—From new structures to programmable functions},
  author={Kortemme, Tanja},
  journal={Cell},
  volume={187},
  number={3},
  pages={526--544},
  year={2024},
  publisher={Elsevier}
}

@article{winnifrith2024generative,
  title={Generative artificial intelligence for de novo protein design},
  author={Winnifrith, Adam and Outeiral, Carlos and Hie, Brian L},
  journal={Current Opinion in Structural Biology},
  volume={86},
  pages={102794},
  year={2024},
  publisher={Elsevier}
}

@article{yang2026past,
  title={The past, present and future of de novo protein design},
  author={Yang, Wei and Wang, Shunzhi and Lee, Gyu Rie and Zhang, Jason Z and Courbet, Alexis and Juergens, David and Wang, Xinru and Schlichthaerle, Thomas and Abedi, Mohamad and Ragotte, Robert and others},
  journal={Nature},
  volume={652},
  number={8112},
  pages={1139--1152},
  year={2026},
  publisher={Nature Publishing Group UK London}
}

@article{mo2026sadit,
  title={SaDiT: Efficient Protein Backbone Design via Latent Structural Tokenization and Diffusion Transformers},
  author={Mo, Shentong and Li, Lanqing},
  journal={arXiv preprint arXiv:2602.06706},
  year={2026}
}

@article{dauparas2022robust,
  title={Robust deep learning--based protein sequence design using ProteinMPNN},
  author={Dauparas, Justas and Anishchenko, Ivan and Bennett, Nathaniel and Bai, Hua and Ragotte, Robert J and Milles, Lukas F and Wicky, Basile IM and Courbet, Alexis and de Haas, Rob J and Bethel, Neville and others},
  journal={Science},
  volume={378},
  number={6615},
  pages={49--56},
  year={2022},
  publisher={American Association for the Advancement of Science}
}

@inproceedings{hsu2022learning,
  title={Learning inverse folding from millions of predicted structures},
  author={Hsu, Chloe and Verkuil, Robert and Liu, Jason and Lin, Zeming and Hie, Brian and Sercu, Tom and Lerer, Adam and Rives, Alexander},
  booktitle={International conference on machine learning},
  pages={8946--8970},
  year={2022},
  organization={PMLR}
}

@article{hayes2025simulating,
  title={Simulating 500 million years of evolution with a language model},
  author={Hayes, Thomas and Rao, Roshan and Akin, Halil and Sofroniew, Nicholas J and Oktay, Deniz and Lin, Zeming and Verkuil, Robert and Tran, Vincent Q and Deaton, Jonathan and Wiggert, Marius and others},
  journal={Science},
  volume={387},
  number={6736},
  pages={850--858},
  year={2025},
  publisher={American Association for the Advancement of Science}
}

@article{madani2023large,
  title={Large language models generate functional protein sequences across diverse families},
  author={Madani, Ali and Krause, Ben and Greene, Eric R and Subramanian, Subu and Mohr, Benjamin P and Holton, James M and Olmos Jr, Jose Luis and Xiong, Caiming and Sun, Zachary Z and Socher, Richard and others},
  journal={Nature biotechnology},
  volume={41},
  number={8},
  pages={1099--1106},
  year={2023},
  publisher={Nature Publishing Group US New York}
}

@article{nijkamp2023progen2,
  title={Progen2: exploring the boundaries of protein language models},
  author={Nijkamp, Erik and Ruffolo, Jeffrey A and Weinstein, Eli N and Naik, Nikhil and Madani, Ali},
  journal={Cell systems},
  volume={14},
  number={11},
  pages={968--978},
  year={2023},
  publisher={Elsevier}
}

@ARTICLE{9477085,
  author={Elnaggar, Ahmed and Heinzinger, Michael and Dallago, Christian and Rehawi, Ghalia and Wang, Yu and Jones, Llion and Gibbs, Tom and Feher, Tamas and Angerer, Christoph and Steinegger, Martin and Bhowmik, Debsindhu and Rost, Burkhard},
  journal={IEEE Transactions on Pattern Analysis and Machine Intelligence}, 
  title={ProtTrans: Toward Understanding the Language of Life Through Self-Supervised Learning}, 
  year={2022},
  volume={44},
  number={10},
  pages={7112-7127},
  doi={10.1109/TPAMI.2021.3095381}}

@INPROCEEDINGS{10821712,
  author={Hrbáň, Hugo and Hoksza, David},
  booktitle={2024 IEEE International Conference on Bioinformatics and Biomedicine (BIBM)}, 
  title={Protein Family Sequence Generation through ProGen2 Fine-Tuning}, 
  year={2024},
  volume={},
  number={},
  pages={7037-7039},
  doi={10.1109/BIBM62325.2024.10821712}}

@article{rives2019biological,
  author={Rives, Alexander and Meier, Joshua and Sercu, Tom and Goyal, Siddharth and Lin, Zeming and Liu, Jason and Guo, Demi and Ott, Myle and Zitnick, C. Lawrence and Ma, Jerry and Fergus, Rob},
  title={Biological Structure and Function Emerge from Scaling Unsupervised Learning to 250 Million Protein Sequences},
  year={2019},
  doi={10.1101/622803},
  url={https://www.biorxiv.org/content/10.1101/622803v4},
  journal={PNAS}
}

@article{watson2023novo,
  title={De novo design of protein structure and function with RFdiffusion},
  author={Watson, Joseph L and Juergens, David and Bennett, Nathaniel R and Trippe, Brian L and Yim, Jason and Eisenach, Helen E and Ahern, Woody and Borst, Andrew J and Ragotte, Robert J and Milles, Lukas F and others},
  journal={Nature},
  volume={620},
  number={7976},
  pages={1089--1100},
  year={2023},
  publisher={Nature Publishing Group UK London}
}

@article{alley2019unified,
  title={Unified rational protein engineering with sequence-based deep representation learning},
  author={Alley, Ethan C and Khimulya, Grigory and Biswas, Surojit and AlQuraishi, Mohammed and Church, George M},
  journal={Nature methods},
  volume={16},
  number={12},
  pages={1315--1322},
  year={2019},
  publisher={Nature Publishing Group US New York}
}

@article{chen2022interpretable,
  title={Interpretable RNA foundation model from unannotated data for highly accurate RNA structure and function predictions},
  author={Chen, Jiayang and Hu, Zhihang and Sun, Siqi and Tan, Qingxiong and Wang, Yixuan and Yu, Qinze and Zong, Licheng and Hong, Liang and Xiao, Jin and Shen, Tao and others},
  journal={arXiv preprint arXiv:2204.00300},
  year={2022}
}

@inproceedings{susaprot2024,
  title={SaProt: Protein Language Modeling with Structure-aware Vocabulary},
  author={Su, Jin and Han, Chenchen and Zhou, Yuyang and Shan, Junjie and Zhou, Xibin and Yuan, Fajie},
  booktitle={The Twelfth International Conference on Learning Representations},
  year={2024}
}

@article{nguyen2024sequence,
  title={Sequence modeling and design from molecular to genome scale with Evo},
  author={Nguyen, Eric and Poli, Michael and Durrant, Matthew G and Kang, Brian and Katrekar, Dhruva and Li, David B and Bartie, Liam J and Thomas, Armin W and King, Samuel H and Brixi, Garyk and others},
  journal={Science},
  volume={386},
  number={6723},
  pages={eado9336},
  year={2024},
  publisher={American Association for the Advancement of Science}
}

@article{meier2021language,
  title={Language models enable zero-shot prediction of the effects of mutations on protein function},
  author={Meier, Joshua and Rao, Roshan and Verkuil, Robert and Liu, Jason and Sercu, Tom and Rives, Alex},
  journal={Advances in neural information processing systems},
  volume={34},
  pages={29287--29303},
  year={2021}
}

@article{heinzinger2019modeling,
  title={Modeling aspects of the language of life through transfer-learning protein sequences},
  author={Heinzinger, Michael and Elnaggar, Ahmed and Wang, Yu and Dallago, Christian and Nechaev, Dmitrii and Matthes, Florian and Rost, Burkhard},
  journal={BMC bioinformatics},
  volume={20},
  number={1},
  pages={723},
  year={2019},
  publisher={Springer}
}

@inproceedings{raotransformer2021,
  title={Transformer protein language models are unsupervised structure learners},
  author={Rao, Roshan and Meier, Joshua and Sercu, Tom and Ovchinnikov, Sergey and Rives, Alexander},
  booktitle={International Conference on Learning Representations},
  year={2021}
}

@article{lewis2025scalable,
  title={Scalable emulation of protein equilibrium ensembles with generative deep learning},
  author={Lewis, Sarah and Hempel, Tim and Jim{\'e}nez-Luna, Jos{\'e} and Gastegger, Michael and Xie, Yu and Foong, Andrew YK and Satorras, Victor Garc{\'\i}a and Abdin, Osama and Veeling, Bastiaan S and Zaporozhets, Iryna and others},
  journal={Science},
  volume={389},
  number={6761},
  pages={eadv9817},
  year={2025},
  publisher={American Association for the Advancement of Science}
}

@article{passaro2025boltz,
  title={Boltz-2: Towards accurate and efficient binding affinity prediction},
  author={Passaro, Saro and Corso, Gabriele and Wohlwend, Jeremy and Reveiz, Mateo and Thaler, Stephan and Somnath, Vignesh Ram and Getz, Noah and Portnoi, Tally and Roy, Julien and Stark, Hannes and others},
  journal={BioRxiv},
  year={2025}
}

@article{chen2025xtrimopglm,
  title={xTrimoPGLM: unified 100-billion-parameter pretrained transformer for deciphering the language of proteins},
  author={Chen, Bo and Cheng, Xingyi and Li, Pan and Geng, Yangli-ao and Gong, Jing and Li, Shen and Bei, Zhilei and Tan, Xu and Wang, Boyan and Zeng, Xin and others},
  journal={Nature Methods},
  volume={22},
  number={5},
  pages={1028--1039},
  year={2025},
  publisher={Nature Publishing Group US New York}
}

@article{lv2025prollama,
  title={Prollama: A protein large language model for multi-task protein language processing},
  author={Lv, Liuzhenghao and Lin, Zongying and Li, Hao and Liu, Yuyang and Cui, Jiaxi and Chen, Calvin Yu-Chian and Yuan, Li and Tian, Yonghong},
  journal={IEEE Transactions on Artificial Intelligence},
  year={2025},
  publisher={IEEE}
}

@misc{stocco2024guidinggenerativeproteinlanguage,
      title={Guiding Generative Protein Language Models with Reinforcement Learning}, 
      author={Filippo Stocco and Maria Artigues-Lleixa and Andrea Hunklinger and Talal Widatalla and Marc Guell and Noelia Ferruz},
      year={2024},
      eprint={2412.12979},
      archivePrefix={arXiv},
      primaryClass={q-bio.BM},
      url={https://arxiv.org/abs/2412.12979}, 
}

@article{widatalla2024aligning,
  title={Aligning protein generative models with experimental fitness via direct preference optimization},
  author={Widatalla, Talal and Rafailov, Rafael and Hie, Brian},
  journal={bioRxiv},
  pages={2024--05},
  year={2024},
  publisher={Cold Spring Harbor Laboratory}
}

@article{jiang2024rapid,
  title={Rapid in silico directed evolution by a protein language model with EVOLVEpro},
  author={Jiang, Kaiyi and Yan, Zhaoqing and Di Bernardo, Matteo and Sgrizzi, Samantha R and Villiger, Lukas and Kayabolen, Alisan and Kim, BJ and Carscadden, Josephine K and Hiraizumi, Masahiro and Nishimasu, Hiroshi and others},
  journal={Science},
  volume={387},
  number={6732},
  pages={eadr6006},
  year={2024},
  publisher={American Association for the Advancement of Science}
}

@article{yang2019machine,
  title={Machine-learning-guided directed evolution for protein engineering},
  author={Yang, Kevin K and Wu, Zachary and Arnold, Frances H},
  journal={Nature methods},
  volume={16},
  number={8},
  pages={687--694},
  year={2019},
  publisher={Nature Publishing Group US New York}
}

@article{bai2022training,
  title={Training a helpful and harmless assistant with reinforcement learning from human feedback},
  author={Bai, Yuntao and Jones, Andy and Ndousse, Kamal and Askell, Amanda and Chen, Anna and DasSarma, Nova and Drain, Dawn and Fort, Stanislav and Ganguli, Deep and Henighan, Tom and others},
  journal={arXiv preprint arXiv:2204.05862},
  year={2022}
}

@inproceedings{korbak2023pretraining,
  title={Pretraining language models with human preferences},
  author={Korbak, Tomasz and Shi, Kejian and Chen, Angelica and Bhalerao, Rasika Vinayak and Buckley, Christopher and Phang, Jason and Bowman, Samuel R and Perez, Ethan},
  booktitle={International conference on machine learning},
  pages={17506--17533},
  year={2023},
  organization={PMLR}
}

@article{stiennon2020learning,
  title={Learning to summarize with human feedback},
  author={Stiennon, Nisan and Ouyang, Long and Wu, Jeffrey and Ziegler, Daniel and Lowe, Ryan and Voss, Chelsea and Radford, Alec and Amodei, Dario and Christiano, Paul F},
  journal={Advances in neural information processing systems},
  volume={33},
  pages={3008--3021},
  year={2020}
}

@article{ouyang2022training,
  title={Training language models to follow instructions with human feedback},
  author={Ouyang, Long and Wu, Jeffrey and Jiang, Xu and Almeida, Diogo and Wainwright, Carroll and Mishkin, Pamela and Zhang, Chong and Agarwal, Sandhini and Slama, Katarina and Ray, Alex and others},
  journal={Advances in neural information processing systems},
  volume={35},
  pages={27730--27744},
  year={2022}
}

@article{ziegler2019fine,
  title={Fine-tuning language models from human preferences},
  author={Ziegler, Daniel M and Stiennon, Nisan and Wu, Jeffrey and Brown, Tom B and Radford, Alec and Amodei, Dario and Christiano, Paul and Irving, Geoffrey},
  journal={arXiv preprint arXiv:1909.08593},
  year={2019}
}

@inproceedings{yuedoes,
  title={Does Reinforcement Learning Really Incentivize Reasoning Capacity in LLMs Beyond the Base Model?},
  author={Yue, Yang and Chen, Zhiqi and Lu, Rui and Zhao, Andrew and Wang, Zhaokai and Song, Shiji and Huang, Gao},
  booktitle={The Thirty-ninth Annual Conference on Neural Information Processing Systems},
  year={2025}
}

@article{rafailov2023direct,
  title={Direct preference optimization: Your language model is secretly a reward model},
  author={Rafailov, Rafael and Sharma, Archit and Mitchell, Eric and Manning, Christopher D and Ermon, Stefano and Finn, Chelsea},
  journal={Advances in neural information processing systems},
  volume={36},
  pages={53728--53741},
  year={2023}
}

@inproceedings{ethayarajh2024model,
  title={Model Alignment as Prospect Theoretic Optimization},
  author={Ethayarajh, Kawin and Xu, Winnie and Muennighoff, Niklas and Jurafsky, Dan and Kiela, Douwe},
  booktitle={International Conference on Machine Learning},
  pages={12634--12651},
  year={2024},
  organization={PMLR}
}

@inproceedings{weifinetuned,
  title={Finetuned Language Models are Zero-Shot Learners},
  author={Wei, Jason and Bosma, Maarten and Zhao, Vincent and Guu, Kelvin and Yu, Adams Wei and Lester, Brian and Du, Nan and Dai, Andrew M and Le, Quoc V},
  booktitle={International Conference on Learning Representations},
  year={2022}
}

@inproceedings{huang2025steering,
  title={Steering Protein Language Models},
  author={Huang, Long-Kai and Zhu, Rongyi and He, Bing and Yao, Jianhua},
  booktitle={International Conference on Machine Learning},
  pages={26247--26260},
  year={2025},
  organization={PMLR}
}

@article{comanici2025gemini,
  title={Gemini 2.5: Pushing the frontier with advanced reasoning, multimodality, long context, and next generation agentic capabilities},
  author={Comanici, Gheorghe and Bieber, Eric and Schaekermann, Mike and Pasupat, Ice and Sachdeva, Noveen and Dhillon, Inderjit and Blistein, Marcel and Ram, Ori and Zhang, Dan and Rosen, Evan and others},
  journal={arXiv preprint arXiv:2507.06261},
  year={2025}
}

@misc{team2025qwq,
  title={Qwq-32b: Embracing the power of reinforcement learning},
  author={Team, Qwen},
  year={2025},
  publisher={March},
  url={https://qwenlm.github.io/blog/qwq-32b/}
}

@article{yang2025qwen3,
  title={Qwen3 technical report},
  author={Yang, An and Li, Anfeng and Yang, Baosong and Zhang, Beichen and Hui, Binyuan and Zheng, Bo and Yu, Bowen and Gao, Chang and Huang, Chengen and Lv, Chenxu and others},
  journal={arXiv preprint arXiv:2505.09388},
  year={2025}
}

@article{bradley1952rank,
  title={Rank analysis of incomplete block designs: I. the method of paired comparisons},
  author={Bradley, Ralph Allan and Terry, Milton E},
  journal={Biometrika},
  volume={39},
  number={3/4},
  pages={324--345},
  year={1952},
  publisher={JSTOR}
}

@inproceedings{
he2026how,
title={How Far Can Unsupervised {RLVR} Scale {LLM} Training?},
author={Bingxiang He and Yuxin Zuo and Zeyuan Liu and Shangziqi Zhao and Zixuan Fu and Junlin Yang and Cheng Qian and Kaiyan Zhang and Yuchen Fan and Ganqu Cui and Xiusi Chen and Youbang Sun and Xingtai Lv and Xuekai Zhu and Li Sheng and Ran Li and Huan-ang Gao and Yuchen Zhang and Lifan Yuan and Bowen Zhou and others},
booktitle={The Thirteenth International Conference on Learning Representations},
year={2026}
}

@inproceedings{
agarwal2025unreasonable,
title={The Unreasonable Effectiveness of Entropy Minimization in {LLM} Reasoning},
author={Shivam Agarwal and Zimin Zhang and Lifan Yuan and Jiawei Han and Hao Peng},
booktitle={Thirty-ninth Conference on Neural Information Processing Systems},
year={2025},
url={https://openreview.net/forum?id=UfFTBEsLgI},
note={Poster Presentation}
}

@inproceedings{
zuo2025ttrl,
title={{TTRL}: Test-Time Reinforcement Learning},
author={Yuxin Zuo and Kaiyan Zhang and Li Sheng and Shang Qu and Ganqu Cui and Xuekai Zhu and Haozhan Li and Yuchen Zhang and Xinwei Long and Ermo Hua and Biqing Qi and Youbang Sun and Zhiyuan Ma and Lifan Yuan and Ning Ding and Bowen Zhou},
booktitle={Thirty-ninth Conference on Neural Information Processing Systems},
year={2025},
url={https://openreview.net/forum?id=VuVhgEiu20},
note={Poster Presentation}
}

@inproceedings{
zhang2025right,
title={Right Question is Already Half the Answer: Fully Unsupervised LLM Reasoning Incentivization},
author={Qingyang Zhang and Haitao Wu and Changqing Zhang and Peilin Zhao and Yatao Bian},
booktitle={Thirty-ninth Conference on Neural Information Processing Systems},
year={2025},
url={https://openreview.net/forum?id=k8Mim6RI5O},
note={Spotlight Presentation}
}

@inproceedings{zhaoabsolute,
  title={Absolute Zero: Reinforced Self-play Reasoning with Zero Data},
  author={Zhao, Andrew and Wu, Yiran and Yue, Yang and Wu, Tong and Xu, Quentin and Lin, Matthieu and Wang, Shenzhi and Wu, Qingyun and Zheng, Zilong and Huang, Gao},
  booktitle={The Thirty-ninth Annual Conference on Neural Information Processing Systems}
}

@article{cui2025entropy,
  title={The entropy mechanism of reinforcement learning for reasoning language models},
  author={Cui, Ganqu and Zhang, Yuchen and Chen, Jiacheng and Yuan, Lifan and Wang, Zhi and Zuo, Yuxin and Li, Haozhan and Fan, Yuchen and Chen, Huayu and Chen, Weize and others},
  journal={arXiv preprint arXiv:2505.22617},
  year={2025}
}

@article{zhang2025no,
  title={No free lunch: Rethinking internal feedback for llm reasoning},
  author={Zhang, Yanzhi and Zhang, Zhaoxi and Guan, Haoxiang and Cheng, Yilin and Duan, Yitong and Wang, Chen and Wang, Yue and Zheng, Shuxin and He, Jiyan},
  journal={arXiv preprint arXiv:2506.17219},
  year={2025}
}

@inproceedings{burnsweak,
  title={Weak-to-Strong Generalization: Eliciting Strong Capabilities With Weak Supervision},
  author={Burns, Collin and Izmailov, Pavel and Kirchner, Jan Hendrik and Baker, Bowen and Gao, Leo and Aschenbrenner, Leopold and Chen, Yining and Ecoffet, Adrien and Joglekar, Manas and Leike, Jan and others},
  booktitle={Forty-first International Conference on Machine Learning}
}

@article{silver2025welcome,
  title={Welcome to the era of experience},
  author={Silver, David and Sutton, Richard S},
  journal={Google AI},
  volume={1},
  pages={11},
  year={2025}
}

@article{shafayat2025can,
  title={Can Large Reasoning Models Self-Train?},
  author={Shafayat, Sheikh and Tajwar, Fahim and Salakhutdinov, Ruslan and Schneider, Jeff and Zanette, Andrea},
  journal={arXiv preprint arXiv:2505.21444},
  year={2025}
}

@article{simonds2025ladder,
  title={Ladder: Self-improving llms through recursive problem decomposition},
  author={Simonds, Toby and Yoshiyama, Akira},
  journal={arXiv preprint arXiv:2503.00735},
  year={2025}
}

@article{zhao2025learning,
  title={Learning to reason without external rewards},
  author={Zhao, Xuandong and Kang, Zhewei and Feng, Aosong and Levine, Sergey and Song, Dawn},
  booktitle={The Thirteenth International Conference on Learning Representations},
  year={2026},
  url={https://openreview.net/forum?id=OU9nFEYR2M}
}

@article{shao2025deepseekmath,
  title={Deepseekmath-v2: Towards self-verifiable mathematical reasoning},
  author={Shao, Zhihong and Luo, Yuxiang and Lu, Chengda and Ren, ZZ and Hu, Jiewen and Ye, Tian and Gou, Zhibin and Ma, Shirong and Zhang, Xiaokang},
  journal={arXiv preprint arXiv:2511.22570},
  year={2025}
}

@article{shao2024deepseekmath,
  title={Deepseekmath: Pushing the limits of mathematical reasoning in open language models},
  author={Shao, Zhihong and Wang, Peiyi and Zhu, Qihao and Xu, Runxin and Song, Junxiao and Bi, Xiao and Zhang, Haowei and Zhang, Mingchuan and Li, YK and others},
  journal={arXiv preprint arXiv:2402.03300},
  year={2024}
}

@article{guo2025deepseek,
  title={DeepSeek-R1 incentivizes reasoning in LLMs through reinforcement learning},
  author={Guo, Daya and Yang, Dejian and Zhang, Haowei and Song, Junxiao and Wang, Peiyi and Zhu, Qihao and Xu, Runxin and Zhang, Ruoyu and Ma, Shirong and Bi, Xiao and others},
  journal={Nature},
  volume={645},
  number={8081},
  pages={633--638},
  year={2025},
  publisher={Nature Publishing Group UK London}
}

@article{hubert2025olympiad,
  title={Olympiad-level formal mathematical reasoning with reinforcement learning},
  author={Hubert, Thomas and Mehta, Rishi and Sartran, Laurent and Horv{\'a}th, Mikl{\'o}s Z and {\v{Z}}u{\v{z}}i{\'c}, Goran and Wieser, Eric and Huang, Aja and Schrittwieser, Julian and Schroecker, Yannick and Masoom, Hussain and others},
  journal={Nature},
  pages={1--3},
  year={2025},
  publisher={Nature Publishing Group UK London}
}

@inproceedings{akter2026nemotron,
  title={Nemotron-crossthink: Scaling self-learning beyond math reasoning},
  author={Akter, Syeda Nahida and Prabhumoye, Shrimai and Novikov, Matvei and Han, Seungju and Lin, Ying and Bakhturina, Evelina and Nyberg, Eric and Choi, Yejin and Patwary, Mostofa and Shoeybi, Mohammad and others},
  booktitle={Proceedings of the 19th Conference of the European Chapter of the Association for Computational Linguistics (Volume 1: Long Papers)},
  pages={984--1002},
  year={2026}
}

@article{dong2025reinforcement,
  title={Reinforcement pre-training},
  author={Dong, Qingxiu and Dong, Li and Tang, Yao and Ye, Tianzhu and Sun, Yutao and Sui, Zhifang and Wei, Furu},
  journal={arXiv preprint arXiv:2506.08007},
  year={2025}
}

@article{hatamizadeh2025rlp,
  title={Rlp: Reinforcement as a pretraining objective},
  author={Hatamizadeh, Ali and Akter, Syeda Nahida and Prabhumoye, Shrimai and Kautz, Jan and Patwary, Mostofa and Shoeybi, Mohammad and Catanzaro, Bryan and Choi, Yejin},
  journal={arXiv preprint arXiv:2510.01265},
  year={2025}
}

@article{she2025dupo,
  title={DuPO: Enabling Reliable LLM Self-Verification via Dual Preference Optimization},
  author={She, Shuaijie and Bao, Yu and Lu, Yu and Xu, Lu and Li, Tao and Zhu, Wenhao and Huang, Shujian and Cheng, Shanbo and Lu, Lu and Wang, Yuxuan},
  journal={arXiv preprint arXiv:2508.14460},
  year={2025}
}

@inproceedings{songmind,
  title={Mind the Gap: Examining the Self-Improvement Capabilities of Large Language Models},
  author={Song, Yuda and Zhang, Hanlin and Eisenach, Carson and Kakade, Sham M and Foster, Dean and Ghai, Udaya},
  booktitle={The Thirteenth International Conference on Learning Representations},
  year={2025}
}

@article{wu2024reft,
  title={Reft: Representation finetuning for language models},
  author={Wu, Zhengxuan and Arora, Aryaman and Wang, Zheng and Geiger, Atticus and Jurafsky, Dan and Manning, Christopher D and Potts, Christopher},
  journal={Advances in Neural Information Processing Systems},
  volume={37},
  pages={63908--63962},
  year={2024}
}

@article{prabakaran2026quantifying,
  title={Quantifying uncertainty in protein representations across models and tasks},
  author={Prabakaran, R and Bromberg, Yana},
  journal={Nature Methods},
  pages={1--9},
  year={2026},
  publisher={Nature Publishing Group US New York}
}

@inproceedings{kuhnsemantic,
  title={Semantic Uncertainty: Linguistic Invariances for Uncertainty Estimation in Natural Language Generation},
  author={Kuhn, Lorenz and Gal, Yarin and Farquhar, Sebastian},
  booktitle={The Eleventh International Conference on Learning Representations},
  year={2023}
}

@article{farquhar2024detecting,
  title={Detecting hallucinations in large language models using semantic entropy},
  author={Farquhar, Sebastian and Kossen, Jannik and Kuhn, Lorenz and Gal, Yarin},
  journal={Nature},
  volume={630},
  number={8017},
  pages={625--630},
  year={2024},
  publisher={Nature Publishing Group UK London}
}

@article{zhang2024edt,
  title={Edt: Improving large language models' generation by entropy-based dynamic temperature sampling},
  author={Zhang, Shimao and Bao, Yu and Huang, Shujian},
  journal={arXiv preprint arXiv:2403.14541},
  year={2024}
}

@inproceedings{zhu2024hot,
  title={Hot or cold? adaptive temperature sampling for code generation with large language models},
  author={Zhu, Yuqi and Li, Jia and Li, Ge and Zhao, YunFei and Jin, Zhi and Mei, Hong},
  booktitle={Proceedings of the AAAI Conference on Artificial Intelligence},
  volume={38},
  number={1},
  pages={437--445},
  year={2024}
}

@inproceedings{yang2024exploring,
  title={Exploring compositional generalization of large language models},
  author={Yang, Haoran and Lu, Hongyuan and Lam, Wai and Cai, Deng},
  booktitle={Proceedings of the 2024 Conference of the North American Chapter of the Association for Computational Linguistics: Human Language Technologies (Volume 4: Student Research Workshop)},
  pages={16--24},
  year={2024}
}

@article{kirk2023survey,
  title={A survey of zero-shot generalisation in deep reinforcement learning},
  author={Kirk, Robert and Zhang, Amy and Grefenstette, Edward and Rockt{\"a}schel, Tim},
  journal={Journal of Artificial Intelligence Research},
  volume={76},
  pages={201--264},
  year={2023}
}

@inproceedings{loshchilovdecoupled,
  title={Decoupled Weight Decay Regularization},
  author={Loshchilov, Ilya and Hutter, Frank},
  booktitle={International Conference on Learning Representations}
}

@inproceedings{ziebart2008maximum,
  title={Maximum entropy inverse reinforcement learning.},
  author={Ziebart, Brian D and Maas, Andrew L and Bagnell, J Andrew and Dey, Anind K and others},
  booktitle={Aaai},
  volume={8},
  pages={1433--1438},
  year={2008},
  organization={Chicago, IL, USA}
}

@inproceedings{haarnoja2018soft,
  title={Soft actor-critic: Off-policy maximum entropy deep reinforcement learning with a stochastic actor},
  author={Haarnoja, Tuomas and Zhou, Aurick and Abbeel, Pieter and Levine, Sergey},
  booktitle={International conference on machine learning},
  pages={1861--1870},
  year={2018},
  organization={Pmlr}
}

@inproceedings{haarnoja2017reinforcement,
  title={Reinforcement learning with deep energy-based policies},
  author={Haarnoja, Tuomas and Tang, Haoran and Abbeel, Pieter and Levine, Sergey},
  booktitle={International conference on machine learning},
  pages={1352--1361},
  year={2017},
  organization={PMLR}
}

@misc{esm2024cambrian,
  author = {{ESM Team}},
  title = {ESM Cambrian: Revealing the mysteries of proteins with unsupervised learning},
  year = {2024},
  publisher = {EvolutionaryScale Website},
  url = {https://evolutionaryscale.ai/blog/esm-cambrian},
  urldate = {2024-12-04}
}

@article{rao2026generalist,
  title={Generalist biological artificial intelligence in modeling the language of life},
  author={Rao, Vishwanatha M and Zhang, Serena and Plosky, Brian S and Hsu, Patrick D and Wang, Bo and Zou, James and Zitnik, Marinka and Topol, Eric J and Rajpurkar, Pranav},
  journal={Nature Biotechnology},
  pages={1--16},
  year={2026},
  publisher={Nature Publishing Group US New York}
}

@article{song2025evaluating,
  title={Evaluating large language models in scientific discovery},
  author={Song, Zhangde and Lu, Jieyu and Du, Yuanqi and Yu, Botao and Pruyn, Thomas M and Huang, Yue and Guo, Kehan and Luo, Xiuzhe and Qu, Yuanhao and Qu, Yi and others},
  journal={arXiv preprint arXiv:2512.15567},
  year={2025}
}

@article{ferruz2022controllable,
  title={Controllable protein design with language models},
  author={Ferruz, Noelia and H{\"o}cker, Birte},
  journal={Nature Machine Intelligence},
  volume={4},
  number={6},
  pages={521--532},
  year={2022},
  publisher={Nature Publishing Group UK London}
}

@inproceedings{welleckneural,
  title={Neural Text Generation With Unlikelihood Training},
  author={Welleck, Sean and Kulikov, Ilia and Roller, Stephen and Dinan, Emily and Cho, Kyunghyun and Weston, Jason},
  booktitle={International Conference on Learning Representations},
  year={2020}
}

@inproceedings{holtzmancurious,
  title={The Curious Case of Neural Text Degeneration},
  author={Holtzman, Ari and Buys, Jan and Du, Li and Forbes, Maxwell and Choi, Yejin},
  booktitle={International Conference on Learning Representations},
  year={2020}
}

@article{chen2021evaluating,
  title={Evaluating large language models trained on code},
  author={Chen, Mark and Tworek, Jerry and Jun, Heewoo and Yuan, Qiming and Pinto, Henrique Ponde De Oliveira and Kaplan, Jared and Edwards, Harri and Burda, Yuri and Joseph, Nicholas and Brockman, Greg and others},
  journal={arXiv preprint arXiv:2107.03374},
  year={2021}
}
\bibliographystyle{unsrtnat}

\newpage
\appendix
\onecolumn

\section{Discussion and Limitations}\label{sec:limitation}

While our unsupervised reward optimization framework demonstrates strong performance in steering protein language models, several limitations remain. 
First, our unsupervised metrics, though effective for both ProGen2 and ESM3 on Func2Seq tasks, underperform structure-based baselines like pLDDT and pTM for inverse folding tasks (despite being much more computationally efficient as discussed in~\Cref{sec:benchmark_rewards}). Second, we only conduct \textit{in silico} evaluations in this study, and wet-lab or clinical validations are required to demonstrate the real-world impact of our proposed framework in biological applications. Finally, the reliance on proxy rewards inherently introduces approximation error compared to oracle supervision, which may limit performance in domains where ground-truth labels are available (\Cref{tab:performance,fig:pass@k}). Future work may investigate more sophisticated reward design, uncertainty-aware optimization, and hybrid approaches that combine unsupervised signals with minimal human feedback.

\textbf{Broader impacts.}  The proposed framework may support beneficial protein engineering by reducing reliance on costly experimental labels, potentially accelerating applications in enzyme design, therapeutics, vaccines, and biomaterials. At the same time, improved controllability of PLMs is a dual-use capability: similar techniques could be misused to search for proteins with harmful biological activity or to optimize pathogen-, toxin-, or immune-evasion-related functions. Since our work is evaluated only \textit{in silico}, any deployment should be coupled with biosecurity screening, restrictions on sensitive targets, access control, request auditing, and institutional biosafety/ethics review.

\section{Proofs and Derivations}\label[appendix]{app:proof}

\subsection{Proof of \Cref{thm:reward_optimization}}

\begin{proof}
Let

\begin{align}
   \mathcal{V}_{\pi_{\theta}} &= \mathbb{E}_{x\sim\mathcal{D},y\sim\pi_{\theta}(y|x)}[r(x, y)] - \beta D_{\text{KL}}[\pi_{\theta}(y|x)\,\|\,\pi_{\text{ref}}(y|x)]\label{eqn:rl_obj}\\
    \mathcal{V}^* &\triangleq  \mathbb{E}_{x\sim\mathcal{D},y\sim\pi^*(y|x)}[r(x, y)] - \beta D_{\text{KL}}[\pi^*(y|x)\,\|\,\pi_{\text{ref}}(y|x)],\label{eqn:optimal_obj}
\end{align}

From the definition of $\pi^*$ in Eq.~\ref{eqn:optimal_policy}, take log

\begin{align}
    \log\pi^*(y|x) = \log\pi_{\text{ref}}(y|x) + \frac{r(x, y)}{\beta} - \log Z(x)  
\end{align}

and rearrange

\begin{align}
    r(x, y) = \beta\log\frac{\pi^*(y|x)}{\pi_{\text{ref}}(y|x)} + \beta\log Z(x).
\end{align}

Substitute into $\mathcal{V}_{\pi_{\theta}}$ in Eq.~\ref{eqn:rl_obj}:

\begin{align}
    \mathcal{V}_{\pi_{\theta}} &= \mathbb{E}_x\mathbb{E}_{y\sim\pi_{\theta}}\left[\beta\log\frac{\pi^*}{\pi_{\text{ref}}} + \beta\log Z(x) - \beta\log\frac{\pi_{\theta}}{\pi_{\text{ref}}}\right]\nonumber\\
    &= \beta\mathbb{E}_x\left[\log Z(x) - \mathbb{E}_{y\sim\pi_{\theta}(\cdot | x)}\left[\log\frac{\pi_{\theta}(y|x)}{\pi^*(y|x)}\right] \right]\nonumber\\
    &= \underbrace{\beta\mathbb{E}_x[\log Z(x)]}_{\mathcal{V}^*} - \beta\mathbb{E}_x[D_{\text{KL}}(\pi_{\theta}(\cdot|x) || \pi^*(\cdot|x))]\label{eqn:rl_obj_2}
\end{align}

\textbf{Remark:} The first term $\mathcal{V}^*$ is exactly the value of the optimal policy:

\begin{align}
    \mathcal{V}^* = \mathbb{E}_x\mathbb{E}_{y\sim\pi^*}\left[r(x, y) - \beta\log\frac{\pi^*}{\pi_{\text{ref}}}\right],
\end{align}

which can be verified by plugging $\pi^*$ into the original objective in Eq.~\ref{eqn:optimal_obj}.

\end{proof}

\subsection{Proof of Theorem~\ref{thm:SRO_optimality}}
\begin{proof}
We minimize the population objective
\[
\mathcal{L}_{\text{SRO}} = \mathbb{E}_{x\sim\mathcal{D}}\mathbb{E}_{y\sim\pi_{\text{ref}}(y|x)}\left[\frac{\pi_{\theta}(x,y)}{\pi_{\text{ref}}(x,y)}\left(\log\frac{\pi_{\theta}(x,y)}{\pi_{\text{ref}}(x,y)} - \frac{r(x, y)}{\beta} - \alpha\right)\right],
\]
where we write $\pi_\theta(x,y) = \pi_\theta(y|x)\,p_{\mathcal{D}}(x)$ and similarly for $\pi_{\text{ref}}$, so that the joint densities share the same marginal $p_{\mathcal{D}}(x)$. Since the expectation over $x$ is fixed, we can optimize the integrand pointwise for each $(x,y)$.

Define the pointwise loss for a fixed $(x,y)$:
\[
\ell(\pi_\theta) = \frac{\pi_\theta(x,y)}{\pi_{\text{ref}}(x,y)} \left( \log \frac{\pi_\theta(x,y)}{\pi_{\text{ref}}(x,y)} - \frac{r(x,y)}{\beta} - \alpha \right).
\]
Let $u = \pi_\theta(x,y) > 0$ and treat $\pi_{\text{ref}}(x,y)$ as constant. Then
\[
\ell(u) = \frac{u}{\pi_{\text{ref}}} \left( \log \frac{u}{\pi_{\text{ref}}} - \frac{r}{\beta} - \alpha \right).
\]
Differentiate w.r.t.\ $u$:
\[
\frac{d\ell}{du} = \frac{1}{\pi_{\text{ref}}} \left( \log \frac{u}{\pi_{\text{ref}}} - \frac{r}{\beta} - \alpha \right) + \frac{u}{\pi_{\text{ref}}} \cdot \frac{1}{u}
= \frac{1}{\pi_{\text{ref}}} \left( \log \frac{u}{\pi_{\text{ref}}} - \frac{r}{\beta} - \alpha + 1 \right).
\]
Set derivative to zero for optimality:
\[
\log \frac{u^*}{\pi_{\text{ref}}} - \frac{r}{\beta} - \alpha + 1 = 0
\quad \Rightarrow \quad
\log \frac{u^*}{\pi_{\text{ref}}} = \frac{r}{\beta} + \alpha - 1.
\]
Exponentiating both sides yields
\[
u^* = \pi_{\text{ref}}(x,y) \exp\!\left( \frac{r(x,y)}{\beta} + \alpha - 1 \right).
\]
Since this holds for all $(x,y)$, the optimal policy satisfies
\[
\pi_\theta^*(x,y) = \pi_{\text{ref}}(x,y) \exp\!\left( \frac{r(x,y)}{\beta} + \alpha - 1 \right),
\]
which implies the conditional form
\[
\pi_\theta^*(y|x) = \pi_{\text{ref}}(y|x) \exp\!\left( \frac{r(x,y)}{\beta} + \alpha - 1 \right).
\]
This completes the proof.
\end{proof}







\subsection{Game-Theoretic View of Binarized Reward Optimization (BRO)}\label[appendix]{app:game_BRO}

From a game-theoretic view, the reward maximization problem induced by $\mathcal{L}_{\text{BRO}}$ in~\Cref{eqn:obj_bro} can be interpreted as zero-sum game where the learned policy $\pitheta$ competes against the reference policy $\piref$ over a labeled dataset $\mathcal{D} = \mathcal{D}^+ \cup \mathcal{D}^-$. For each prompt-response pair $(x, y)$, a \emph{win} for $\pitheta$ is defined as:
\begin{itemize}
    \item assigning higher relative likelihood than $\piref$ to positive samples ($y \in \mathcal{D}^+$), i.e., $\frac{\pitheta(y|x)}{\piref(y|x)} > 1$, and
    \item assigning lower relative likelihood than $\piref$ to negative samples ($y \in \mathcal{D}^-$), i.e., $\frac{\pitheta(y|x)}{\piref(y|x)} < 1$.
\end{itemize}

Under the BT model~\cite{bradley1952rank}, the probability that $\pitheta$ wins against $\piref$ on $(x, y)$ is modeled via logistic pairwise comparison. Define the signed reward
\[
s_{\text{BRO}}(x, y) = 
\begin{cases}
+1, & y \in \mathcal{D}^+ \\
-1, & y \in \mathcal{D}^-
\end{cases}.
\]
Then the win probability is
\begin{align}
   P_{\text{win}}(x, y) &=  \sigma\left( s_{\text{BRO}}(x, y) \cdot (r_{\theta}(x, y) - r_{\text{ref}}(x, y))\right)\\
   &= \sigma\left( s_{\text{BRO}}(x, y) \cdot \beta\log \frac{\pitheta(y|x)}{\piref(y|x)} \right), 
\end{align}

where $\sigma(z) = \frac{1}{1 + e^{-z}}$ is the logistic sigmoid function.

Maximizing the expected winrate is equivalent to maximizing the log-likelihood of observed wins, which yields the BRO objective by letting $\beta=1$:
\begin{align}
    \mathcal{L}_{\text{BRO}} &= -\mathbb{E}_{(x, y) \sim \mathcal{D}} \left[ \log \sigma\left(s_{\text{BRO}}(x, y) \cdot \beta\log \frac{\pitheta(y|x)}{\piref(y|x)} \right) \right]\\
    &= -\mathbb{E}_{(x, y)\sim\mathcal{D}}\left[\mathbbm{1}\{y\in\mathcal{D}^+\}\log\frac{\pi_{\theta}}{\pi_{\text{ref}} + \pi_{\theta}} + \left(1-\mathbbm{1}\{y\in\mathcal{D}^+\}\right)\log\frac{\pi_{\text{ref}}}{\pi_{\text{ref}} + \pi_{\theta}}\right].
\end{align}

Thus, minimizing $\mathcal{L}_{\text{BRO}}$ corresponds to optimizing $\pitheta$ to dominate $\piref$ in a zero-sum competition: $\pitheta$ seeks to increase its relative likelihood on $\mathcal{D}^+$ and decrease it on $\mathcal{D}^-$, with victory governed by a Bradley--Terry (BT) ranking where the ``skill'' of a policy on $(x, y)$ is proportional to its log-probability $\log \pi(y|x)$ by~\Cref{eqn:implicit_reward}. This frames policy improvement as maximizing the empirical winrate against the reference policy under pairwise logistic comparison.

\section{Details of Prompt Sets}\label[appendix]{app:promptset}
Following~\cite{10821712}, the seven Pfam families used for our prompt sets are:

\begin{itemize}
    \item PF00002 - GPCRs
    \item PF00042 - Globins
    \item PF00125 - Core histones
    \item PF00127 - Copper binding proteins
    \item PF00257 - Dehydrins
    \item PF00262 - Calreticulins
    \item PF03668 - P-loop ATPase
\end{itemize}

To construct compositional OOD prompt sets, for each family token, we append the first 10 residues of proteins from the next family on the list above to create 7 compositional prompt templates:

\begin{itemize}
    \item \texttt{<|pf00002|>[first 10 tokens from seqs in PF00042]}
    \item \texttt{<|pf00042|>[first 10 tokens from seqs in PF00125]}
    \item \texttt{<|pf00125|>[first 10 tokens from seqs in PF00127]}
    \item \texttt{<|pf00127|>[first 10 tokens from seqs in PF00257]}
    \item \texttt{<|pf00257|>[first 10 tokens from seqs in PF00262]}
    \item \texttt{<|pf00262|>[first 10 tokens from seqs in PF03668]}
    \item \texttt{<|pf03668|>[first 10 tokens from seqs in PF00002]}
\end{itemize}

\section{Ablation Study}\label[appendix]{app:ablation}

We provide comparative analysis of various design choices of reward in~\Cref{tab:ablation_reward}. We conclude that BRO and SRO achieve consistent improvement over the base model with different reward functions across all temperatures, demonstrating its robustness to noisy supervision signals. Among which, BRO + L1-mean and SRO + $r_T$ bring the most significant gain, which are reported in~\Cref{tab:post_training_result}. 


\begin{table}[tbh]
\centering
\caption{BRO/SRO with different choices of rewards for fine-tuning Progen2-small-mix7 on Pfam700 Func2Seq task.}
\begin{tabular}{
    l
    S[table-format=1.3]
    S[table-format=1.3]
    S[table-format=1.3]
    S[table-format=1.3]
    S[table-format=1.3]
    S[table-format=1.3]
}
\toprule
Method & {$T=0.3$} & {$T=0.5$} & {$T=0.7$} & {$T=1.0$} & {$T=1.5$} & {Average} \\
\midrule
base                & 0.210 & 0.288 & 0.390 & 0.401 & 0.118 & 0.281 \\
\midrule
BRO ($r_T$)         & 0.327 & 0.406 & 0.470 & 0.453 & 0.276 & 0.386 \\
BRO (L1,~\Cref{tab:post_training_result})            & 0.407 & 0.445 & 0.494 & 0.521 & 0.298 & 0.433 \\
BRO (entropy)       & 0.249 & 0.323 & 0.414 & 0.453 & 0.266 & 0.341 \\
BRO (PCA)           & 0.228 & 0.302 & 0.359 & 0.325 & 0.129 & 0.268 \\
\midrule
SRO ($r_T$, ~\Cref{tab:post_training_result})         & 0.383 & 0.455 & 0.498 & 0.499 & 0.301 & 0.427 \\
SRO (L1)            & 0.339 & 0.384 & 0.458 & 0.504 & 0.261 & 0.389 \\
SRO (entropy)       & 0.269 & 0.350 & 0.434 & 0.480 & 0.287 & 0.364 \\
\bottomrule
\end{tabular}\label{tab:ablation_reward}
\end{table}

Additionally, we present ablation on our proposed multi-temperature sampling strategy in~\Cref{tab:ablation_sampling}. For BRO (L1, $T=0.7$) and BRO (entropy, $T=1.0$), we sample the same amount of data as the reported BRO (L1, multi-T) with 5$\times$ prompts and the best-performing metric at a single temperature according to~\Cref{fig:reward_metrics}(a). BRO trained on multi-temperature data significantly outperforms its single-temperature counterparts, highlighting the effectiveness of the proposed reward design and sampling strategy in~\Cref{sec:reward_design}.

\begin{table}[htb]
\centering
\caption{Performance comparison across temperatures (values rounded to 3 significant figures).}
\begin{tabular}{
    l
    S[table-format=1.3]
    S[table-format=1.3]
    S[table-format=1.3]
    S[table-format=1.3]
    S[table-format=1.3]
    S[table-format=1.3]
}
\toprule
Method & {$T=0.3$} & {$T=0.5$} & {$T=0.7$} & {$T=1.0$} & {$T=1.5$} & {Average} \\
\midrule
base                     & 0.210 & 0.288 & 0.390 & 0.401 & 0.118 & 0.281 \\
\midrule
BRO (L1, multi-T)           & 0.407 & 0.445 & 0.494 & 0.521 & 0.298 & 0.433 \\
BRO (L1, $T=0.7$)        & 0.279 & 0.325 & 0.393 & 0.487 & 0.348 & 0.366 \\
BRO (entropy, $T=1.0$)   & 0.225 & 0.308 & 0.406 & 0.410 & 0.206 & 0.311 \\
\bottomrule
\end{tabular}\label{tab:ablation_sampling}
\end{table}

\newpage
\section{More Experiments on Reward Design}\label[appendix]{app:reward_design}

For our reward design, we report all candidate metrics considered in the following tables, evaluated with ESMC-600M and a smaller version, ESMC-300M\footnote{Naming rules: L1-mean measures the pairwise L1 distance between the mean per-residue embeddings of generated sequences, with $d(z_1, z_2)\triangleq \|z_1-z_2\|_1$. PCA1-bos stands for the first PCA component of the BOS token embedding of the sequence, with $d(z_1, z_2) \triangleq v^\top z_1 \quad \text{where} \quad v = \arg\max_{\|v\|_2 = 1} v^\top \Sigma v$ and $\Sigma$ being the empirical covariance matrix of the centered data.}, which demonstrate consistent correlations:

\begin{table}[htbp]
\centering
\caption{Ground-truth AUROC on DRAME-Func2Seq with ESMC-300M embedding model.}
\label{tab:up000274756}
\begin{tabular}{l|cccccc}
\hline
\textbf{Metric} & \textbf{T=0} & \textbf{T=0.3} & \textbf{T=0.5} & \textbf{T=0.7} & \textbf{T=1.0} & \textbf{T=1.5} \\
\hline
cos-mean & 0.3852 & 0.4820 & 0.4917 & 0.4996 & 0.6036 & 0.5218 \\
cos-bos & 0.3451 & 0.3922 & 0.4183 & 0.4899 & 0.6461 & 0.6585 \\
cos-eos & 0.2726 & 0.3101 & 0.3456 & 0.4205 & 0.5222 & 0.5303 \\
euc-mean & 0.4488 & 0.5937 & 0.6639 & 0.6931 & 0.8029 & 0.6606 \\
euc-bos & 0.3003 & 0.3461 & 0.3831 & 0.4639 & 0.6342 & 0.6555 \\
euc-eos & 0.2499 & 0.2948 & 0.3410 & 0.4216 & 0.5302 & 0.5626 \\
L2$^2$-mean & 0.5108 & 0.6469 & 0.6914 & 0.7051 & 0.7924 & 0.6470 \\
L2$^2$-bos & 0.3420 & 0.3883 & 0.4135 & 0.4833 & 0.6331 & 0.6519 \\
L2$^2$-eos & 0.2815 & 0.3232 & 0.3611 & 0.4373 & 0.5345 & 0.5612 \\
L1-mean & 0.4682 & 0.6228 & 0.7075 & 0.7432 & 0.8583 & 0.7420 \\
L1-bos & 0.2738 & 0.3174 & 0.3661 & 0.4701 & 0.6821 & 0.6615 \\
L1-eos & 0.2435 & 0.2892 & 0.3351 & 0.4156 & 0.5194 & 0.5546 \\
Predictive Ent & 0.1319 & 0.1302 & 0.1881 & 0.3177 & 0.4836 & 0.5037 \\
Normalized Ent& 0.1385 & 0.1459 & 0.1990 & 0.3041 & 0.5323 & 0.8805 \\
Semantic Ent~\cite{kuhnsemantic} & - & - & 0.6390 & 0.6374 & 0.6763 & 0.4658 \\
PCA-1-mean & 0.6198 & 0.5294 & 0.6294 & 0.7808 & 0.7009 & 0.6463 \\
PCA-1-bos & 0.8937 & 0.8936 & 0.8728 & 0.8120 & 0.7099 & 0.5120 \\
PCA-1-eos & 0.8787 & 0.8747 & 0.8570 & 0.7956 & 0.6664 & 0.5732 \\
PCA-2-mean & 0.9114 & 0.9095 & 0.8796 & 0.7137 & 0.6202 & - \\
PCA-2-bos & 0.5678 & 0.5248 & 0.5395 & 0.5626 & 0.5333 & - \\
PCA-2-eos & 0.5263 & 0.5037 & 0.5399 & 0.5580 & 0.4298 & - \\
\hline
\end{tabular}
\end{table}

\begin{table}[htbp]
\centering
\caption{Ground-truth AUROC on Pfam700-IID-Func2Seq with ESMC-300M embedding model.}
\label{tab:pfam700_seen}
\begin{tabular}{l|cccccc}
\hline
\textbf{Metric} & \textbf{T=0} & \textbf{T=0.3} & \textbf{T=0.5} & \textbf{T=0.7} & \textbf{T=1.0} & \textbf{T=1.5} \\
\hline
cos-mean & - & 0.7627 & 0.8314 & 0.8859 & 0.6005 & 0.2071 \\
cos-bos & - & 0.7786 & 0.8313 & 0.8743 & 0.7847 & 0.4359 \\
cos-eos & - & 0.6361 & 0.6974 & 0.7726 & 0.6073 & 0.2030 \\
euc-mean & - & 0.8339 & 0.9053 & 0.9345 & 0.7827 & 0.2957 \\
euc-bos & - & 0.7780 & 0.8393 & 0.8826 & 0.8083 & 0.4737 \\
euc-eos & - & 0.6446 & 0.7185 & 0.7932 & 0.6634 & 0.2249 \\
L2$^2$-mean & - & 0.8354 & 0.8914 & 0.9224 & 0.7489 & 0.2865 \\
L2$^2$-bos & - & 0.7788 & 0.8301 & 0.8717 & 0.7890 & 0.4557 \\
L2$^2$-eos & - & 0.6550 & 0.7190 & 0.7873 & 0.6482 & 0.2254 \\
L1-mean & - & 0.8629 & 0.9259 & 0.9422 & 0.8171 & 0.5148 \\
L1-bos & - & 0.7800 & 0.8402 & 0.8803 & 0.7984 & 0.4780 \\
L1-eos & - & 0.6463 & 0.7185 & 0.7745 & 0.6357 & 0.2419 \\
Predictive Ent & - & 0.4078 & 0.5360 & 0.7291 & 0.8357 & 0.8402 \\
Normalized Ent & - & 0.3371 & 0.3851 & 0.6268 & 0.8942 & 0.9054 \\
Semantic Ent~\cite{kuhnsemantic} & - & 0.8552 & - & 0.8384 & 0.6205 & - \\
PCA-1-mean & - & 0.9734 & 0.9500 & 0.7009 & 0.8162 & 0.7532 \\
PCA-1-bos & - & 0.8853 & 0.6380 & 0.5619 & 0.5164 & 0.5820 \\
PCA-1-eos & - & 0.9375 & 0.9196 & 0.8192 & 0.5865 & 0.5841 \\
PCA-2-mean & - & 0.7146 & 0.5458 & 0.8963 & 0.5542 & 0.7971 \\
PCA-2-bos & - & 0.8851 & 0.9372 & 0.5818 & 0.6803 & 0.8801 \\
PCA-2-eos & - & 0.6223 & 0.7382 & 0.8422 & 0.7034 & 0.6819 \\
\hline
\end{tabular}
\end{table}

\begin{table}[htbp]
\centering
\caption{Ground-truth AUROC on Pfam700-OOD-Func2Seq with ESMC-300M embedding model.}
\label{tab:pfam700_unseen}
\begin{tabular}{l|cccccc}
\hline
\textbf{Metric} & \textbf{T=0} & \textbf{T=0.3} & \textbf{T=0.5} & \textbf{T=0.7} & \textbf{T=1.0} & \textbf{T=1.5} \\
\hline
cos-mean & - & 0.3313 & 0.4095 & 0.5541 & 0.5134 & 0.3053 \\
cos-bos & - & 0.3189 & 0.3907 & 0.4851 & 0.4912 & 0.3188 \\
cos-eos & - & 0.2535 & 0.3044 & 0.4000 & 0.4140 & 0.2238 \\
euc-mean & - & 0.4670 & 0.5590 & 0.6498 & 0.5992 & 0.3192 \\
euc-bos & - & 0.3153 & 0.3812 & 0.4806 & 0.4907 & 0.3140 \\
euc-eos & - & 0.2612 & 0.3151 & 0.4187 & 0.4415 & 0.2288 \\
L2$^2$-mean & - & 0.4637 & 0.5386 & 0.6360 & 0.6110 & 0.3386 \\
L2$^2$-bos & - & 0.3158 & 0.3857 & 0.4800 & 0.4926 & 0.3238 \\
L2$^2$-eos & - & 0.2735 & 0.3306 & 0.4268 & 0.4495 & 0.2343 \\
L1-mean & - & 0.5305 & 0.5979 & 0.6185 & 0.5932 & 0.4214 \\
L1-bos & - & 0.3160 & 0.3770 & 0.4660 & 0.4803 & 0.3091 \\
L1-eos & - & 0.2634 & 0.3123 & 0.3952 & 0.4130 & 0.2359 \\
Predictive Ent & - & 0.2341 & 0.3264 & 0.4731 & 0.6578 & 0.8222 \\
Normalized Ent & - & 0.2087 & 0.2209 & 0.3602 & 0.6896 & 0.8803 \\
Semantic Ent~\cite{kuhnsemantic} & - & 0.5506 & 0.5395 & 0.6066 & 0.5490 & 0.3787 \\
PCA-1-mean & - & 0.8288 & 0.8090 & 0.7536 & 0.6214 & 0.5273 \\
PCA-1-bos & - & 0.7841 & 0.7743 & 0.6462 & 0.5051 & 0.5414 \\
PCA-1-eos & - & 0.7712 & 0.7472 & 0.7095 & 0.6144 & 0.6190 \\
PCA-2-mean & - & 0.6068 & 0.6574 & 0.5620 & 0.5276 & 0.7499 \\
PCA-2-bos & - & 0.5521 & 0.5522 & 0.6378 & 0.6305 & 0.8029 \\
PCA-2-eos & - & 0.5039 & 0.5172 & 0.5604 & 0.5162 & 0.5721 \\
\hline
\end{tabular}
\end{table}

\begin{table}[htbp]
\centering
\caption{Ground-truth AUROC on DRAME-Func2Seq with ESMC-600M embedding model.}
\label{tab:up000274756_esmc600m}
\begin{tabular}{l|cccccc}
\hline
\textbf{Metric} & \textbf{T=0} & \textbf{T=0.3} & \textbf{T=0.5} & \textbf{T=0.7} & \textbf{T=1.0} & \textbf{T=1.5} \\
\hline
cos-mean & 0.4273 & 0.5565 & 0.5933 & 0.6073 & 0.7129 & 0.5986 \\
cos-bos & 0.4180 & 0.5473 & 0.6352 & 0.7005 & 0.7958 & 0.7135 \\
cos-eos & 0.2887 & 0.3764 & 0.4355 & 0.4920 & 0.6006 & 0.5240 \\
euc-mean & 0.4394 & 0.5929 & 0.6808 & 0.7211 & 0.8135 & 0.7078 \\
euc-bos & 0.3495 & 0.4752 & 0.5889 & 0.6742 & 0.7876 & 0.7156 \\
euc-eos & 0.2448 & 0.3133 & 0.3801 & 0.4506 & 0.5759 & 0.5623 \\
L2$^2$-mean & 0.5111 & 0.6581 & 0.7170 & 0.7378 & 0.8092 & 0.6957 \\
L2$^2$-bos & 0.4126 & 0.5405 & 0.6270 & 0.6912 & 0.7860 & 0.7094 \\
L2$^2$-eos & 0.2785 & 0.3559 & 0.4068 & 0.4619 & 0.5773 & 0.5625 \\
L1-mean & 0.4399 & 0.5913 & 0.6907 & 0.7432 & 0.8477 & 0.7731 \\
L1-bos & 0.3618 & 0.4975 & 0.6278 & 0.7349 & 0.8512 & 0.7113 \\
L1-eos & 0.2274 & 0.2822 & 0.3464 & 0.4300 & 0.5699 & 0.5401 \\
Predictive Ent & 0.1319 & 0.1302 & 0.1881 & 0.3177 & 0.4836 & 0.5037 \\
Normalized Ent & 0.1385 & 0.1459 & 0.1990 & 0.3041 & 0.5323 & 0.8805 \\
Semantic Ent~\cite{kuhnsemantic}  & - & 0.6085 & - & 0.6172 & 0.6309 & 0.4942 \\
PCA-1-mean & 0.6233 & 0.5311 & 0.6387 & 0.7932 & 0.6865 & 0.5581 \\
PCA-1-bos & 0.7246 & 0.5097 & 0.8958 & 0.8509 & 0.7729 & 0.6420 \\
PCA-1-eos & 0.7056 & 0.5774 & 0.7023 & 0.7958 & 0.6470 & 0.6686 \\
PCA-2-mean & 0.9118 & 0.9139 & 0.8784 & 0.6930& - & 0.7128 \\
PCA-2-bos & 0.9273 & 0.9257& 0.7568 & 0.5666 & - & 0.5788 \\
PCA-2-eos & 0.8989& 0.9089 & 0.8047 & 0.5758& - & 0.5596 \\
\hline
\end{tabular}
\end{table}

\begin{table}[htbp]
\centering
\caption{Ground-truth AUROC on Pfam700-IID-Func2Seq with ESMC-600M embedding model.}
\label{tab:pfam700_seen_esmc600m}
\begin{tabular}{l|cccccc}
\hline
\textbf{Metric} & \textbf{T=0} & \textbf{T=0.3} & \textbf{T=0.5} & \textbf{T=0.7} & \textbf{T=1.0} & \textbf{T=1.5} \\
\hline
cos-mean & - & 0.8069 & 0.8682 & 0.9079 & 0.6255 & 0.2290 \\
cos-bos & - & 0.8326 & 0.8740 & 0.9041 & 0.8048 & 0.4209 \\
cos-eos & - & 0.6925 & 0.7484 & 0.8119 & 0.5591 & 0.2252 \\
euc-mean & - & 0.8449 & 0.9121 & 0.9349 & 0.7941 & 0.3173 \\
euc-bos & - & 0.8167 & 0.8755 & 0.9099 & 0.8212 & 0.4296 \\
euc-eos & - & 0.6621 & 0.7238 & 0.8034 & 0.6390 & 0.2269 \\
L2$^2$-mean & - & 0.8536 & 0.9024 & 0.9247 & 0.7610 & 0.2998 \\
L2$^2$-bos & - & 0.8295 & 0.8708 & 0.9010 & 0.8007 & 0.4208 \\
L2$^2$-eos & - & 0.6752 & 0.7230 & 0.7952 & 0.6261 & 0.2280 \\
L1-mean & - & 0.8590 & 0.9266 & 0.9452 & 0.8358 & 0.5629 \\
L1-bos & - & 0.8212 & 0.8861 & 0.9196 & 0.8137 & 0.4367 \\
L1-eos & - & 0.6499 & 0.7158 & 0.7924 & 0.6300 & 0.2680 \\
Predictive Ent & - & 0.4078 & 0.5360 & 0.7291 & 0.8357 & 0.8402 \\
Normalized Ent & - & 0.3371 & 0.3851 & 0.6268 & 0.8942 & 0.9054 \\
Semantic Ent~\cite{kuhnsemantic}  & - & 0.8536 & 0.8602 & 0.7981 & 0.6140 & 0.3131 \\
PCA-1-mean & - & 0.9745 & 0.9513 & 0.6758 & 0.7889& 0.7080\\
PCA-1-bos & - & 0.8383 & 0.7507 & 0.5445 & 0.7162 & 0.6951 \\
PCA-1-eos & - & 0.9496 & 0.9255 & 0.6443 & 0.5278 & 0.6219 \\
PCA-2-mean & - & 0.5816& 0.5897 & 0.8973 & 0.6205 & 0.6484 \\
PCA-2-bos & - & 0.9760 & 0.9654 & 0.8626 & 0.6660 & 0.8669\\
PCA-2-eos & - & 0.5739 & 0.8063 & 0.8622 & 0.7018 & 0.7693 \\
\hline
\end{tabular}
\end{table}

\begin{table}[htbp]
\centering
\caption{Ground-truth AUROC on Pfam700-OOD-Func2Seq with ESMC-600M embedding model.}
\label{tab:pfam700_unseen_esmc600m}
\begin{tabular}{l|cccccc}
\hline
\textbf{Metric} & \textbf{T=0} & \textbf{T=0.3} & \textbf{T=0.5} & \textbf{T=0.7} & \textbf{T=1.0} & \textbf{T=1.5} \\
\hline
cos-mean & - & 0.3936 & 0.4685 & 0.5935 & 0.5352 & 0.3302 \\
cos-bos & - & 0.4224 & 0.4841 & 0.5803 & 0.6110 & 0.3445 \\
cos-eos & - & 0.2704 & 0.3297 & 0.4528 & 0.4173 & 0.2503 \\
euc-mean & - & 0.4920 & 0.5792 & 0.6529 & 0.6126 & 0.3414 \\
euc-bos & - & 0.4104 & 0.4809 & 0.5806 & 0.6057 & 0.3317 \\
euc-eos & - & 0.2539 & 0.3026 & 0.4361 & 0.4438 & 0.2455 \\
L2$^2$-mean & - & 0.4993 & 0.5643 & 0.6414 & 0.6206 & 0.3566 \\
L2$^2$-bos & - & 0.4152 & 0.4780 & 0.5763 & 0.6099 & 0.3447 \\
L2$^2$-eos & - & 0.2581 & 0.3077 & 0.4382 & 0.4535 & 0.2499 \\
L1-mean & - & 0.5224 & 0.5990 & 0.6257 & 0.6174 & 0.4602 \\
L1-bos & - & 0.4298 & 0.5035 & 0.5819 & 0.5966 & 0.3311 \\
L1-eos & - & 0.2321 & 0.2731 & 0.3936 & 0.4050 & 0.2518 \\
Predictive Ent & - & 0.2341 & 0.3264 & 0.4731 & 0.6578 & 0.8222 \\
Normalized Ent & - & 0.2087 & 0.2209 & 0.3602 & 0.6896 & 0.8803 \\
Semantic Ent~\cite{kuhnsemantic}  & - & 0.5583 & 0.5854 & 0.5866 & 0.5625 & 0.4088 \\
PCA-1-mean & - & 0.8324 & 0.8125 & 0.7470 & 0.6262 & 0.5642\\
PCA-1-bos & - & 0.8429 & 0.7992 & 0.6334 & 0.6336 & 0.6769\\
PCA-1-eos & - & 0.7634 & 0.7529 & 0.7180 & 0.5097& 0.5947\\
PCA-2-mean & - & 0.5546 & 0.6030 & 0.5601 & 0.5207 & 0.6780 \\
PCA-2-bos & - & 0.5119 & 0.7333 & 0.6406 & 0.5619 & 0.5677 \\
PCA-2-eos & - & 0.5724 & 0.5217 & 0.5634 & 0.7023 & 0.6086 \\
\hline
\end{tabular}
\end{table}

\section{More Experiments on Post-training}\label[appendix]{app:post-training}

In parallel to~\Cref{tab:post_training_result}, we also provide keyword recovery statistics for each Pfam family prompts in~\Cref{tab:family-wise-stat} where BRO/SRO consistently achieve the best/second best performance across 7 families. Additionally, we also examine the sensitivity of SRO to hyperparameter $\beta$ in~\Cref{eqn:SRO}. ~\Cref{tab:hyperparameter} demonstrates that the performance of SRO is stable across different values of $\beta$ and outperforms the batch-normalized weight DPO (wDPO) proposed in~\cite{widatalla2024aligning}.

\begin{table}[ht]
\centering
\caption{Keyword recovery for each Pfam family averaged over five sampling temperatures.}
\resizebox{\linewidth}{!}{%
\begin{tabular}{lcccccccc}
\toprule
 & PF00002 & PF00042 & PF00125 & PF00127 & PF00257 & PF00262 & PF03668 & Overall \\
\midrule
151M model & & & & & & & & \\
\midrule
Progen2-small-mix7 & 0.359 & 0.168 & 0.295 & 0.439 & 0.156 & 0.146 & 0.406 & 0.281 \\
Progen2-small-mix7 DPO & 0.502 & 0.213 & 0.328 & 0.516 & \underline{0.237} & 0.158 & 0.494 & 0.350 \\
Progen2-small-mix7 KTO & 0.494 & 0.197 & 0.335 & 0.494 & 0.165 & 0.169 & \underline{0.503} & 0.337 \\
Progen2-small-mix7 BRO & \textbf{0.630} & \underline{0.238} & \textbf{0.503} & \underline{0.672} & \textbf{0.279} & \underline{0.180} & \textbf{0.528} & \textbf{0.433} \\
Progen2-small-mix7 SRO & \underline{0.617} & \textbf{0.266} & \underline{0.468} & \textbf{0.735} & 0.197 & \textbf{0.210} & 0.495 & \underline{0.427} \\
Progen2-small-mix7 Oracle & 0.632 & 0.432 & 0.513 & 0.776 & 0.365 & 0.304 & 0.397 & 0.488 \\
\midrule
764M model & & & & & & & & \\
\midrule
Progen2-medium-mix7 & 0.270 & 0.187 & 0.497 & 0.377 & 0.223 & 0.535 & 0.604 & 0.385 \\
Progen2-medium-mix7 DPO & 0.396 & 0.275 & 0.644 & 0.496 & \textbf{0.350} & 0.523 & 0.675 & 0.480 \\
Progen2-medium-mix7 KTO & 0.385 & 0.245 & 0.661 & 0.527 & 0.282 & 0.681 & 0.751 & 0.505 \\
Progen2-medium-mix7 BRO & \textbf{0.539} & \textbf{0.327} & \textbf{0.839} & \underline{0.691} & 0.233 & \textbf{0.907} & \underline{0.754} & \textbf{0.613} \\
Progen2-medium-mix7 SRO & \underline{0.482} & \underline{0.315} & \underline{0.838} & \textbf{0.693} & \underline{0.292} & \underline{0.827} & \textbf{0.766} & \underline{0.602} \\
Progen2-medium-mix7 Oracle & 0.276 & 0.457 & 0.812 & 0.578 & 0.502 & 0.824 & 0.774 & 0.603 \\
\bottomrule
\end{tabular}\label{tab:family-wise-stat}
}
\end{table}

\begin{table}[tbh]
\centering
\caption{SRO with different $\beta$ for fine-tuning Progen2-small-mix7 on Pfam700 Func2Seq task.}
\begin{tabular}{
    l
    S[table-format=1.3]
    S[table-format=1.3]
    S[table-format=1.3]
    S[table-format=1.3]
    S[table-format=1.3]
    S[table-format=1.3]
}
\toprule
Method & {$T=0.3$} & {$T=0.5$} & {$T=0.7$} & {$T=1.0$} & {$T=1.5$} & {Average} \\
\midrule
base                & 0.210 & 0.288 & 0.390 & 0.401 & 0.118 & 0.281 \\
\midrule
SRO ($\beta=1$, ~\Cref{tab:post_training_result})         & 0.383 & 0.455 & 0.498 & 0.499 & 0.301 & 0.427 \\
SRO ($\beta=0.5$)            & 0.374 & 0.433 & 0.483 & 0.492 & 0.308 & 0.418 \\
SRO ($\beta=2.0$)            & 0.329 & 0.448 & 0.508 & 0.506 & 0.286 & 0.415 \\
Weighted DPO~\cite{widatalla2024aligning}            & 0.455 & 0.463 & 0.455 & 0.377 & 0.153 & 0.381\\
\bottomrule
\end{tabular}\label{tab:hyperparameter}
\end{table}

\section{Details of Model Training and Evaluation}\label[appendix]{app:model_training}

 To reduce noise in the unsupervised rewards, for each prompt, we keep only the top-$4$ and bottom-$4$ of the 64 generated samples in $\mathcal{D}$ for all post-training experiments (including single-temperature control groups in~\Cref{app:ablation}), resulting in $56k$ samples for training and $28k$ for testing. The reward statistics of this selected subset are consistent with those of the full dataset, yet exhibit a stronger correlation with the ground truth (see~\Cref{tab:top4_bottom4}). For BRO/DPO/KTO that operate on binarized reward, we label top-$4$ as positive samples and bottom-$4$ as negative ones. For training DPO specifically, we randomly pair the top-$4$ and bottom-$4$ samples to produce four preference pairs per prompt. We train all BRO/SRO variants as well as two offline baselines, DPO~\cite{rafailov2023direct} and KTO~\cite{ethayarajh2024model} on the same dataset for $1$ epoch with AdamW optimizer~\cite{loshchilovdecoupled} and learning rate \texttt{5e-7} without extensive hyperparameter search on a single NVIDIA A40 GPU. Due to the memory constraint, we train the 151M model with batch size 32 and the 764M model with batch size 8. All fine-tuned models employ a top-$k$ decoding with $k=15$. The DPO and KTO baselines are trained with default configurations from the \texttt{TRL} package. All SFT and post-training are conducted via an 8:2 train-validation split.

 \begin{table}[ht]
\centering
\caption{Promptwise AUROC on selected test sets with ESMC-300M embedding model.}
\begin{tabular}{lcccc}
\toprule
 & $T=0.3$ & $T=0.5$ & $T=0.7$ & $T=1.0$ \\
\midrule
Pfam700-OOD-Func2Seq & & & & \\
\midrule
L1-mean & 0.4949 & 0.6214 & 0.7411 & 0.5875 \\
Predictive Ent & 0.2332 & 0.3479 & 0.5360 & 0.7179 \\
Normalized Ent & 0.2149 & 0.2358 & 0.4267 & 0.7841 \\
PCA1-bos & 0.8839 & 0.8583 & 0.6051 & 0.5929 \\
\midrule
Pfam700-OOD-Func2Seq Top4-Bot4 & & & & \\
\midrule
L1-mean & 0.5590 & 0.6963 & 0.8006 & 0.6426 \\
Predictive Ent & 0.2262 & 0.3317 & 0.5429 & 0.7659 \\
Normalized Ent & 0.2072 & 0.2144 & 0.4295 & 0.8185 \\
PCA1-bos & 0.9086 & 0.8829 & 0.6114 & 0.6098 \\
\bottomrule
\end{tabular}\label{tab:top4_bottom4}
\end{table}

\section{Critical Temperature as the Best-Performing Temperature}\label[appendix]{app:critical_temp}

To support this claim in~\Cref{sec:benchmark_rewards}, we report the recovery rate of the ESM3 and Progen2-small-mix7 base model on DRAME-Func2Seq and Pfam700-Func2Seq tasks respectively:

\begin{table}[ht]
\centering
\caption{Keyword recovery on DRAME-Func2Seq and Pfam700-Func2Seq tasks over six sampling temperatures. Pfam700-IID-Func2Seq refers to the 7 Pfam family tokens (without residue tokens) used in the SFT stage of Progen2-small-mix7.}
\resizebox{\linewidth}{!}{%
\begin{tabular}{lcccccc}
\toprule
 & $T=0.0$ & $T=0.3$ & $T=0.5$ & $T=0.7$ & $T=1.0$ & $T=1.5$ \\
\midrule
DRAME-Func2Seq & 0.207 & 0.210 & 0.229 & 0.259 & 0.263 & 0.094 \\
Pfam700-IID-Func2Seq & - & 0.681 & 0.747 & 0.809 & 0.755 & 0.322 \\
Pfam700-OOD-Func2Seq & - & 0.210 & 0.288 & 0.390 & 0.401 & 0.118 \\
\bottomrule
\end{tabular}
}
\end{table}

The best-performing temperature for ESM3 on DRAME-Func2Seq and Progen2-small-mix7 on Pfam700-Func2Seq is $T\sim 1.0$ and $T\in [0.7, 1.0]$ respectively, coinciding with the critical temperatures observed in~\Cref{fig:reward_metrics}(a) and (b).

\end{document}